\theoremstyle{plain}
\newtheorem{theorem}{Theorem}[section]
\theoremstyle{definition}
\newtheorem{definition}[theorem]{Definition}
\theoremstyle{remark}
\newtheorem{example}[theorem]{Example}
\definecolor{darkred}{rgb}{0.7,0.1,0.1}
\definecolor{medred}{rgb}{0.5,0.1,0.1}
\definecolor{midred}{rgb}{0.7,0.2,0.2}
\definecolor{vdarkred}{rgb}{0.4,0.1,0.1}
\definecolor{darkslategray}{rgb}{0.18, 0.31, 0.31} 
\definecolor{platinum}{rgb}{0.9, 0.89, 0.89} 
\definecolor{gray}{rgb}{.4,.4,.4}
\definecolor{midgrey}{rgb}{0.5,0.5,0.5}
\definecolor{middarkgrey}{rgb}{0.35,0.35,0.35}
\definecolor{darkgrey}{rgb}{0.3,0.3,0.3}
\definecolor{darkred}{rgb}{0.7,0.1,0.1}
\definecolor{midblue}{rgb}{0.2,0.2,0.7}
\definecolor{darkblue}{rgb}{0.1,0.1,0.5}
\definecolor{darkgreen}{rgb}{0.1,0.5,0.1}
\definecolor{defseagreen}{cmyk}{0.69,0,0.50,0}
\definecolor{purple3}{RGB}{125,38,205}          
\definecolor{tyellow1}{HTML}{FCE94F}
\definecolor{tyellow2}{HTML}{EDD400}
\definecolor{tyellow3}{HTML}{C4A000}
\definecolor{torange1}{HTML}{FCAF3E}
\definecolor{torange2}{HTML}{F57900}
\definecolor{torange3}{HTML}{C35C00}
\definecolor{tbrown1}{HTML}{E9B96E}
\definecolor{tbrown2}{HTML}{C17D11}
\definecolor{tbrown3}{HTML}{8F5902}
\definecolor{tgreen1}{HTML}{8AE234}
\definecolor{tgreen2}{HTML}{73D216}
\definecolor{tgreen3}{HTML}{4E9A06}
\definecolor{tblue1}{HTML}{729FCF}
\definecolor{tblue2}{HTML}{3465A4}
\definecolor{tblue3}{HTML}{204A87}
\definecolor{tpurple1}{HTML}{AD7FA8}
\definecolor{tpurple2}{HTML}{75507B}
\definecolor{tpurple3}{HTML}{5C3566}
\definecolor{tred1}{HTML}{EF2929}
\definecolor{tred2}{HTML}{CC0000}
\definecolor{tred3}{HTML}{A40000}
\definecolor{tlgray1}{HTML}{EEEEEC}
\definecolor{tlgray2}{HTML}{D3D7CF}
\definecolor{tlgray3}{HTML}{BABDB6}
\definecolor{tdgray1}{HTML}{888A85}
\definecolor{tdgray2}{HTML}{555753}
\definecolor{tdgray3}{HTML}{2E3436}
\newcommand{\dghlight}[1]{{\color[RGB]{0,120,0}#1}}
\newtheoremstyle{nthmstyle}
{3pt}
{3pt}
{}
{}
{\bfseries}
{.}
{.5em}
{}
\theoremstyle{nthmstyle}
\crefname{enumi}{}{}
\crefname{rstprop}{Proposition}{Propositions}
\newcommand{\fml}[1]{{\mathcal{#1}}}
\newcommand{\tn}[1]{\textnormal{#1}}
\newcommand{\msf}[1]{\ensuremath\mathsf{#1}}
\newcommand{\mbf}[1]{\ensuremath\mathbf{#1}}
\newcommand{\mbb}[1]{\ensuremath\mathbb{#1}}
\newcommand{\waxp}{\ensuremath\mathsf{WAXp}}
\newcommand{\wcxp}{\ensuremath\mathsf{WCXp}}
\newcommand{\axp}{\ensuremath\mathsf{AXp}}
\newcommand{\cxp}{\ensuremath\mathsf{CXp}}
\newcommand{\aex}{\ensuremath\mathsf{AEx}}
\newcommand{\xhsc}{AxFi\xspace}
\DeclareMathOperator*{\fs}{\msf{Fs}}
\newcommand{\fsn}[1]{\msf{Fs}_{#1}}
\newcommand{\bigand}{\bigwedge}
\newcommand{\exv}{\ensuremath\mathbf{E}}
\newcommand{\cf}{\ensuremath\upsilon} 
\newcommand{\cfn}[1]{\ensuremath\upsilon_{#1}} 
\newcommand{\svn}[1]{\msf{Sc}_{#1}}
\newcommand{\similar}{\ensuremath\sigma}
\DeclareMathOperator*{\sv}{\msf{Sc}}
\definecolor{gray}{rgb}{.4,.4,.4}
\definecolor{midgrey}{rgb}{0.5,0.5,0.5}
\definecolor{middarkgrey}{rgb}{0.35,0.35,0.35}
\definecolor{darkgrey}{rgb}{0.3,0.3,0.3}
\definecolor{darkred}{rgb}{0.7,0.1,0.1}
\definecolor{midblue}{rgb}{0.2,0.2,0.7}
\definecolor{darkblue}{rgb}{0.1,0.1,0.5}
\definecolor{defseagreen}{cmyk}{0.69,0,0.50,0}
\newcommand{\jnoteF}[1]{}
\newcounter{Comment}[Comment]
\DeclareMathOperator*{\limply}{\rightarrow}
\declaretheoremstyle[
  headfont=\bfseries,
  bodyfont=\itshape,
  numberwithin=section,
]{StdThmStyle}
\tikzset{
  0 my edge/.style={densely dashed, my edge},
  my edge/.style={-{Stealth[]}},
}
\setlist{nosep}
\titleformat{\paragraph}[runin]
{\normalfont\normalsize\bfseries}{\theparagraph}{1em}{}
\titlespacing{\section}{0pt}{*2.15}{*1.0}
\titlespacing{\subsection}{0pt}{*1.25}{*0.75}
\titlespacing{\subsubsection}{0pt}{*0.35}{*0.5}
\titlespacing{\paragraph}{0pt}{*0.1}{*0.575}
\newcommand\nparagraph{%
  \@startsection{paragraph}
    {4}
    {\z@}
    {0.225ex \@plus0.225ex \@minus.125ex}
    {-1em}
    {\normalfont\normalsize\bfseries}%
}
\titlespacing{\paragraph}{%
  0pt}{
  0.225\baselineskip}{
  1em}
\title{Rigorous Feature Importance Scores based on Shapley Value and Banzhaf Index}
\author{%
    Xuanxiang Huang \\
    Nanyang Technological University \\
    CNRS@CREATE \\
    Singapore \\
    \texttt{xuanxiang.huang@ntu.edu.sg} \\
    \And
    Olivier L\'{e}toff\'{e} \\
    University of Toulouse \\
    France \\
    \texttt{olivier.letoffe@orange.fr} \\
    \And 
    Joao Marques-Silva \\
    ICREA, University of Lleida \\
    Spain \\
    \texttt{jpms@icrea.cat} \\
}
\begin{document}

\maketitle

\begin{abstract}
  Feature attribution methods based on game theory are ubiquitous in
  the field of eXplainable Artificial Intelligence (XAI).
  Recent works proposed rigorous feature attribution 
  using logic-based explanations, specifically targeting high-stakes
  uses of machine learning (ML) models.
  Typically, such works exploit \emph{weak abductive explanation}
  (WAXp) as the characteristic function to assign importance to
  features. However, one possible downside is that the contribution of
  non-WAXp sets is neglected.
  In fact, non-WAXp sets can also convey important information, because
  of the relationship between \emph{formal explanations} (XPs) and
  \emph{adversarial examples} (AExs).
  Accordingly, this paper leverages Shapley value and Banzhaf index 
  to devise two novel feature importance scores.
  We take into account non-WAXp sets when computing feature contribution,
  and the novel scores quantify how effective each feature is at excluding AExs.
  Furthermore, the paper
  identifies properties and studies the computational complexity
  of the proposed scores.
\end{abstract}

\section{Introduction} \label{sec:intro}
There is an increasing demand for transparency and accountability
in the decision-making processes of complex machine learning (ML) models
~\cite{yu2019s,novelli2023accountability}.
This demand has spurred rapid growth in the research domain of eXplainable AI (XAI)
~\cite{miller2019explanation,xai-bk19,molnar-bk20}.
XAI can be defined as the process of bridging the gap between the inner workings of 
ML systems and human understanding, with the goal of achieving trustworthy AI.
Feature attribution methods are one of the most widely used approaches
~\cite{kononenko-jmlr10,guestrin-kdd16,lundberg-nips20} in XAI.
Among these, game-theoretic attribution methods such as 
SHAP scores~\cite{kononenko-jmlr10,lundberg-nips17,barcelo-jmlr23} are highly popular.
The implementation of these methods depend on the choice of characteristic functions
and power index frameworks; different choices can lead to different outcomes
~\cite{najmi-icml20,blockbaum-aistats20}.

Within the domain of formal XAI~\cite{msi-aaai22,darwiche-lics23}, several rigorous feature attribution methods
~\cite{ignatiev-sat24,izza-aaai24}
have been proposed to ensure the rigor of explanations in high-risk and safety-critical applications.
Essentially, these methods employ logic-based predicates, most use weak abductive explanation (WAXp),
as the characteristic functions for computing feature importance.
Moreover, these rigorous feature attribution methods can be generally referred to as axiomatic 
aggregations of features~\cite{izza-aaai24},
and the computed scores can be termed
\emph{feature importance scores}.

Although logic-based predicates provide rigorous guarantees, 
these predicates are too restricted and only provide a 
coarse-grained quantification of rigorous feature importance,
limiting their ability to provide a broader picture.
To justify our claim, let us switch our angle from computing formal explanations 
(including abductive explanations (AXp) and contrastive explanations (CXp)~\cite{msi-aaai22}) to 
detecting and excluding adversarial examples (AExs)~\cite{szegedy-iclr14}
in the vicinity of the target data point, where the connection 
between AExs and formal explanations 
has been established~\cite{inms-nips19}.
Given two sets of fixed features that are non-WAXp, such WAXp predicates can
only indicate that both sets failed to exclude all AExs.
However, it is possible that one set excludes more AExs than the other,
and such information cannot be conveyed using WAXp predicates
as the characteristic functions.

We argue that by considering non-WAXp sets in the vicinity of the target data point,
one can achieve a more fine-grained quantification of rigorous feature importance 
and obtain a more detailed and informative perspective on feature importance in formal XAI.

\paragraph{Contributions.}
This paper proposes two novel rigorous feature importance scores,
one based on Shapley value and the other based on Banzhaf Index,
called \xhsc (\underline{A}dversarial e\underline{x}amples-based \underline{F}eature \underline{i}mportance).
Additionally, we define novel properties that feature importance scores should satisfy.
The computation of these scores is based on a novel characteristic function,
called \emph{contrastive explanation forest} (CXp-Forest).
Our work complements existing rigorous feature attribution methods in several aspects:
\begin{enumerate}[leftmargin=1em]
\item
In high-stakes and safety-critical scenarios,
\xhsc can offer rigorous guarantees compared to SHAP scores.
\item
For decision tree models, \xhsc can be computed in polynomial time,
making them more efficient compared to existing rigorous feature importance scores.
\end{enumerate}

\paragraph{Organization.}
The paper is organized as follows.
\cref{sec:prelim} introduces the notation and definitions used
throughout the paper, along with related work.
\cref{sec:cxpf} details the key component of this paper, CXp-Forest.
\cref{sec:score} presents the \xhsc.
It also analyses the properties of \xhsc scores, and the complexity of computing 
\xhsc scores.
\cref{sec:res} presents preliminary experimental evidence.
\cref{sec:limit} discusses the limitation and extension of our method.
\cref{sec:conc} concludes the paper and outlines future work.
Due to lack of space, some proofs are included in \cref{app:proofs}.


\section{Preliminaries} \label{sec:prelim}
\paragraph{Classification \& regression problems.}
Let $\fml{F}=\{1,\ldots,m\}$ denote a set of features.
Each feature $i\in\fml{F}$ takes values from a \emph{bounded} domain $\mbb{D}_i$.
Domains can be categorical or ordinal. If ordinal, domains can be
discrete or real-valued.
Feature space is defined by
$\mbb{F}=\mbb{D}_1\times\mbb{D}_2\times\ldots\times\mbb{D}_m$.
The notation $\mbf{x}=(x_1,\ldots,x_m)$ denotes an arbitrary point in 
feature space, where each $x_i$ is a variable taking values from
$\mbb{D}_i$. Moreover, the notation $\mbf{v}=(v_1,\ldots,v_m)$
represents a specific point in feature space, where each $v_i$ is a
constant representing one concrete value from $\mbb{D}_i$.
In the case of classification, each point in feature space is mapped to a
class taken from a set $\fml{K}=\{c_1,c_2,\ldots,c_K\}$.
Classes can also be categorical or ordinal.
In the case of regression, each point in feature space is mapped to an
ordinal value taken from a set $\mbb{K}$, e.g.\ $\mbb{K}$ could denote
$\mbb{Z}$ or $\mbb{R}$.
Therefore, a classifier $\fml{M}_{C}$ is characterized by a
non-constant \emph{classification function} $\kappa$ that maps feature
space $\mbb{F}$ into the set of classes $\fml{K}$,
i.e.\ $\kappa:\mbb{F}\to\fml{K}$.
A regression model $\fml{M}_R$ is characterized by a non-constant
\emph{regression function} $\rho$ that maps feature space $\mbb{F}$
into the set elements from $\mbb{K}$, i.e.\ $\rho:\mbb{F}\to\mbb{K}$.
When viable, we will represent an ML model $\fml{M}$ by a tupple
$(\fml{F},\mbb{F},\mbb{T},\tau)$, with $\tau:\mbb{F}\to\mbb{T}$,
without specifying whether $\fml{M}$ denotes a classification
or a regression model.
A \emph{sample} (or instance) denotes a pair $(\mbf{v},q)$, where
$\mbf{v}\in\mbb{F}$ and either $q\in\fml{K}$, with
$q=\kappa(\mbf{v})$, or $q\in\mbb{K}$, with $q=\rho(\mbf{v})$.

\paragraph{$l_0$ Norm.}
The distance between two vectors $\mbf{x}$ and $\mbf{y}$ is denoted by
$\lVert\mbf{x}-\mbf{y}\rVert$,
the $l_{0}$ norm is defined as follows~\cite{robinson-bk03}:
\begin{equation}
\lVert\mbf{x}-\mbf{y}\rVert_{0}=\sum\nolimits_{i=1}^{m}\tn{ITE}(x_i\not=y_i,1,0)
\end{equation}
where \emph{ITE} denotes the \emph{If-Then-Else} operator.
$l_{0}$ norm represents the number of different
variables (or features) between two vectors $\mbf{x}$ and $\mbf{y}$.

\paragraph{Distributions \& expected value.}
Throughout the paper, we assume \emph{product distributions}~\cite{vandenbroeck-aaai21},
where all features are independent.
The \emph{expected value} of an ML model $\tau$
is denoted by $\mbf{E}[\tau]$.

\paragraph{Explanation problems.}
An explanation problem is a tuple $\fml{E}=(\fml{M},(\mbf{v},q))$,
where $\fml{M}$ can either be a classification or a regression model,
and $(\mbf{v},q)$ is a target sample, with $\mbf{v}\in\mbb{F}$.

\paragraph{Shapley value \& Banzhaf index.}
Shapley value~\cite{shapley-ctg53,algaba2019handbook}
is recognized as one of the well-known power indices
for measuring the importance of voters in a priori voting power
~\cite{machover-psr04},
specifically the Shapley-Shubik index~\cite{shapley-apsr54}.
Another notable example is Banzhaf index~\cite{banzhaf-rlr65,patel2021game}.
Shapley value has been widely used in XAI
since they respect a number of important axioms~\cite{algaba2019handbook}.
To compute Shapley value, one needs to choose a characteristic function $\cf$,
which assigns a real value to each $\fml{S}\subseteq\fml{F}$.
Finally, let $\svn{S}:\fml{F}\to\mbb{R}$, i.e.\ the Shapley value for
feature $i$, be defined by
%
\begin{equation} \label{eq:sv}
\svn{S}(i;\fml{E}, \cf) := \sum_{\fml{S}\subseteq(\fml{F}\setminus\{i\})}
\left(\frac{\Delta_i(\fml{S};\fml{E},\cf)}{|\fml{F}|\times\binom{|\fml{F}|-1}{|\fml{S}|}}\right).
\end{equation}
And the Banzhaf index for feature $i$ is defined by
\begin{equation} \label{eq:bv}
\svn{B}(i;\fml{E},\cf) := \sum_{\fml{S}\subseteq(\fml{F}\setminus\{i\})}
\left(\frac{\Delta_i(\fml{S};\fml{E},\cf)}{2^{|\fml{F}|-1}}\right)
\end{equation}
To simplify the notation, the following definitions are used,
\begin{align}
\Delta_i(\fml{S};\fml{E},\cf) & :=
\left(\cf(\fml{S}\cup\{i\})-\cf(\fml{S})\right)
\label{eq:def:delta}
\end{align}

\paragraph{SHAP scores.}
SHAP~\cite{lundberg-nips17} is a well-known instantiation of Shapley values in the context of explainability,
it considers the expected value of the model's predictive function as the characteristic function~\cite{lundberg-nips17},
that is,
\[
\cfn{e}(\fml{S};\fml{E}) := \exv[\tau | \mbf{x}_{\fml{S}}=\mbf{v}_{\fml{S}}].
\]
Given an instance $(\mbf{v},q)$, the computed SHAP scores
of a feature $i$ can be regarded as that feature's contribution to the prediction.

\paragraph{Similarity predicate.}
Given an ML model and some input $\mbf{x}$, the output of the ML model
is \emph{distinguishable} with respect to the sample $(\mbf{v},q)$ if the
observed change in the model's output is deemed sufficient; 
otherwise it is \emph{similar}.
This is represented by a \emph{similarity} predicate (which can be
viewed as a boolean function)
$\similar:\mbb{F}\times\mbb{R}\to\{\bot,\top\}$ (where $\bot$ signifies
\emph{false}, and $\top$ signifies \emph{true}). Concretely, given $\delta\in\mbb{R}$,
$\similar(\mbf{x};\fml{E})$ holds true iff the change in the ML model
output is deemed \emph{insufficient} and so no observable difference
exists between the ML model's output for $\mbf{x}$ and $\mbf{v}$ by a factor of $\delta$.
\footnote{
Parameterization are shown after the separator
';', and will be elided when clear from the context.}
For regression problems, given a change in the input from $\mbf{v}$ to
$\mbf{x}$, the outputs are similar if,
\[
\similar(\mbf{x};\fml{E})
:= [|\rho(\mbf{x})-\rho(\mbf{v})|\le\delta],
\]
otherwise, it is distinguishable~\footnote{
Exploiting a threshold to decide whether there exists an observable
change has been used in the context of adversarial
robustness~\cite{barrett-nips23}.}.
For classification problems, similarity is defined as not changing the predicted class.
Given a change in the input from 
$\mbf{v}$ to $\mbf{x}$, the outputs are similar if,
\[
\similar(\mbf{x};\fml{E}) := [\kappa(\mbf{x})=\kappa(\mbf{v})].
\]

\paragraph{Adversarial examples.}
Adversarial examples (AExs) serve to reveal the brittleness of ML
models~\cite{szegedy-iclr14,szegedy-iclr15,johnson-sttt23}.
Given a sample $(\mbf{v},q)$, and the norm $l_0$
~\footnote{In fact, any norm~\cite{horn-bk12} can be used.},
a point $\mbf{x}\in\mbb{F}$ is an \emph{adversarial example} if the prediction
for $\mbf{x}$ is distinguishable from that for $\mbf{v}$. Formally, we
write,
\[
\aex(\mbf{x};\fml{E}) := 
\left(||\mbf{x}-\mbf{v}||_{0}\le\epsilon\right)\land
\neg\similar(\mbf{x};\fml{E}),
\]
where the $l_0$ distance between the given point $\mbf{v}$ and other
points of interest is restricted to $\epsilon>0$.
In this paper, we primarily focus on AExs
having the minimal distance measured by the norm $l_0$
around the given point $\mbf{v}$.
For simplicity, we will use the term `AExs' to specifically refer to
$l_0$-minimal AExs throughout the rest of the paper.

\paragraph{Abductive \& contrastive explanations.}
Abductive explanations (AXps) and contrastive explanations (CXps)
are examples of formal explanations for classification
problems~\cite{msi-aaai22}.
They can be generalized to encompass regression problems.

A weak abductive explanation (WAXp) denotes a set of features
$\fml{S}\subseteq\fml{F}$, such that for every point in feature space,
where only the features in $\fml{F}\setminus\fml{S}$ are allowed to change,
the ML model's output is similar to that of the given sample $(\mbf{v},q)$.
The condition for a set of features to represent a WAXp (which also
defines a corresponding predicate $\waxp$) is as follows:
\[
\forall(\mbf{x}\in\mbb{F}).\left(\bigand_{i\in\fml{S}}x_i=v_i\right)\limply \left(\similar(\mbf{x};\fml{E})\right).
\]
Moreover, an AXp is a subset-minimal WAXp, i.e.
\[
\axp(\fml{S};\fml{E}) := \waxp(\fml{S};\fml{E})\land\forall(t\in\fml{S}).\neg\waxp(\fml{S}\setminus\{t\};\fml{E}).
\]

A weak contrastive explanation (WCXp) denotes a set of features
$\fml{S}\subseteq\fml{F}$, such that there exists some point in
feature space, where only the features in $\fml{S}$ are allowed to
change, that makes the ML model output distinguishable from the given
sample $(\mbf{v},q)$.
The condition for a set of features to represent a WCXp (which also
defines a corresponding predicate $\wcxp$) is as follows:
\[
\exists(\mbf{x}\in\mbb{F}).\left(\bigand_{i\in\fml{F}\setminus\fml{S}}x_i=v_i\right)\land\left(\neg \similar(\mbf{x};\fml{E})\right)
\]
Moreover, a CXp is a subset-minimal WCXp, i.e.
\[
\cxp(\fml{S};\fml{E}) := \wcxp(\fml{S};\fml{E})\land\forall(t\in\fml{S}).\neg\wcxp(\fml{S}\setminus\{t\};\fml{E}).
\]

Given an explanation problem $\fml{E}$,
we use $\mbb{A}(\fml{E})$ (resp. $\mbb{C}(\fml{E})$) to denote its set of AXps (resp. CXps),
and $\mbb{A}_i(\fml{E})$ (resp. $\mbb{C}_i(\fml{E})$ to denote the set of AXps (resp. CXps)
where each AXp (resp. CXp) contains the target feature $i$.
The relationship between adversarial examples and 
formal explanations is well-known~\cite{inms-nips19}.

\paragraph{Feature (ir)relevancy.}
The set of features that are included in at least one (abductive) 
explanation are defined as follows:
\begin{equation}
\mathfrak{F}(\fml{E}) := \{i\in\fml{X}\,|\,\fml{X}\in2^{\fml{F}}\land\axp(\fml{X})\}
\end{equation}
where predicate $\axp(\fml{X})$ holds true iff $\fml{X}$ is an AXp.
($\mathfrak{F}(\fml{E})$ remains unchanged
if CXps are used instead of AXps~\cite{msi-aaai22}.)
Finally, a feature $i\in\fml{F}$ is \emph{irrelevant}, if $i\not\in\mathfrak{F}(\fml{E})$;
otherwise feature $i$ is \emph{relevant}.

\begin{figure}[t]
\begin{subfigure}{0.5\textwidth}
\begin{center}
\renewcommand{\arraystretch}{0.95}
\renewcommand{\tabcolsep}{0.5em}
\scalebox{0.85}{
\begin{tabular}{cccc} \toprule
$x_1$ & $x_2$ & $x_3$ & $\kappa(\mbf{x})$ 
\\ \toprule
0     & 0     & 0  & 0  \\
0     & 0     & 1  & 0  \\
0     & 0     & 2  & 1  \\
0     & 1     & 0  & 0  \\
0     & 1     & 1  & 0  \\
0     & 1     & 2  & 1  \\
1     & 0     & 0  & 0  \\
1     & 0     & 1  & 0  \\
1     & 0     & 2  & 0  \\
1     & 1     & 0  & 0  \\
1     & 1     & 1  & 0  \\
1     & 1     & 2  & 1  \\
2     & 0     & 0  & 0  \\
2     & 0     & 1  & 0  \\
2     & 0     & 2  & 1  \\
2     & 1     & 0  & 1  \\
2     & 1     & 1  & 1  \\
\textcolor{blue}{\textbf{2}}     & \textcolor{blue}{\textbf{1}}     & \textcolor{blue}{\textbf{2}}  & \textcolor{blue}{\textbf{1}}  \\
\bottomrule
\end{tabular}

}
\end{center}
\caption{Tabular representation of function $\kappa$.}
\label{fig:func01}
\end{subfigure}
\begin{subfigure}{0.5\textwidth}
%
\begin{center}
\renewcommand{\arraystretch}{0.95}
\renewcommand{\tabcolsep}{0.5em}
\scalebox{0.85}{
\begin{tabular}{ccc} \toprule
$\mbb{A}(\fml{E})$ & $\mbb{C}(\fml{E})$ & AExs of each CXp
\\ \toprule
$\{1,2\}$     & $\{1,2\}$     &  $\{(1,0,2)\}$ \\
$\{2,3\}$     & $\{1,3\}$     &  $\{(0,1,0), (0,1,1), (1,1,0), (1,1,1)\}$ \\
$\{1,3\}$     & $\{2,3\}$     &  $\{(2,0,0), (2,0,1)\}$ \\
\bottomrule
\end{tabular}

}
\end{center}
\caption{XPs of $\fml{E}=(\kappa, ((2,1,2),1))$.}
\label{fig:xp01}
\end{subfigure}
\caption{Classifier running example}
\end{figure}

\paragraph{Running example.}
We consider the following classification function
as the running example throughout the paper.
\begin{example} \label{ex:run01a}
Suppose we have a function $\kappa$ defined on $\fml{F}=\{1,2,3\}$ and $\fml{K}=\{0,1\}$
where $\mbb{D}_{1} = \{0,1,2\}$, $\mbb{D}_{2} = \{0,1\}$ and $\mbb{D}_{3} = \{0,1,2\}$.
Its tabular representation is depicted in~\cref{fig:func01}.
Suppose we are interested in the data point $(\mbf{v},c) = ((2,1,2),1)$.
For the explanation problem $\fml{E}=(\kappa, ((2,1,2),1))$,
\cref{fig:xp01} lists $\mbb{A}(\fml{E})$ and $\mbb{C}(\fml{E})$,
along with the AExs covered by each CXp.
Clearly, since all three features are included in $\mathfrak{F}(\fml{E})$,
there are no irrelevant features.
\end{example}

\subsection{Related Work}
Feature attribution methods based on game theory are extremely popular,
the most notable example is the SHAP scores~\cite{lundberg-nips17}.
In the past few years, many works propose different variants of Shapley values
or alternative methods for attributing importance to features
~\cite{friedler-nips21,najmi-icml20,sundararajan2017axiomatic,frye2020asymmetric,jordan-iclr19,watson-facct22,magazzeni-facct22,kulynych2017feature,patel2021high}.
There also exist works identifying issues with Shapley values and SHAP scores
~\cite{nguyen-ieee-access21,friedler-icml20,blockbaum-aistats20,hms-ijar24,msh-cacm24,lhms-aaai25},
and proposing solutions to mitigate these drawbacks.

In domains where ML applications are considered high-risk and safety-critical, 
formal XAI~\cite{msi-aaai22,darwiche-lics23}
has emerged to provide explanations with mathematically provable guarantees.
Recent work~\cite{ignatiev-sat24} proposes
two measures, i.e., \emph{formal feature attribution} (FFA) and \emph{weighted} FFA (WFFA),
for computing rigorous feature importance,
where the score of each feature reflects its frequency of occurrence in AXps.
Independent work~\cite{izza-aaai24} 
provides a more systematic approach
for aggregating AXps into different axiomatic frameworks.
They have also identified additional properties that FFA should respect.


\section{Novel Characteristic Function} \label{sec:cxpf}
This section introduces a novel characteristic function, called contrastive explanation forest (CXp-Forest),
which serves as the basis for computing fine-grained quantifications of rigorous feature importance.

\paragraph{Motivation.}
Given an explanation problem $\fml{E}$, for any set
$\fml{S} \subseteq \fml{F}$ of fixed features,
$\fml{S}$ is either a WAXp or a non-WAXp.
It is known that any WAXp is a hitting set~\cite{msi-aaai22} of CXps.
In contrast, a non-WAXp is not a hitting set of CXps; it intersects with some or none of the CXps.
Equivalently, a WAXp excludes all AExs,
whereas a non-WAXp excludes some or none of the AExs.
Moreover, for two features $i,j\not\in\fml{S}$,
if $\fml{S}\cup\{i\}$ excludes more AExs than $\fml{S}\cup\{j\}$,
we can consider feature $i$ to be more important than feature $j$ with respect to $\fml{S}$.
To achieve this, we define a function for each CXp to signify
whether a feature in this CXp intersects with $\fml{S}$.
For better illustration, we further represent this function as a \emph{binary decision tree} (DT)~\cite{wegener2000branching}.
Consequently, the set of all CXps can be represented as a forest.

\begin{definition}[Contrastive explanation Forests]
Given an explanation problem $\fml{E} = (\fml{M}, (\mbf{v}, q))$, 
along with its set of CXps $\mbb{C}(\fml{E})$ such that $|\mbb{C}(\fml{E})| = n$
~\footnote{In the worst case, the number of CXps can be exponential.},
a CXp-Forest defined on the variable set $\fml{F}=\{1, \dots, m\}$ is a linear combination of
$n$ binary DTs and represents the following function:
\begin{equation} \label{eq:ch}
\cf_{r}(\fml{S}; \fml{E}) = \frac{1}{n} \times \sum_{\fml{Y}_{i}\in\mbb{C}(\fml{E})} \tn{ITE}(\fml{S} \cap \fml{Y}_{i} \neq \emptyset, w_i, 0)
\end{equation}
where the weight $w_i\in\mbb{R}$ represents the quantity of AExs covered by $\fml{Y}_i$
~\footnote{In the implementation, we use the ratio of AExs rather than the actual number of AExs as the weights.}.
The output value of $\cf_{r}(\fml{S}; \fml{E})$ represents the total
number of AExs, and this value is distributed evenly across all $n$ CXps.
We can also consider an unweighted CXp-Forest by setting $w_i=1$ for each weight.
\end{definition}

\begin{example}\label{ex:run01b}
For the running example~\cref{ex:run01a},
we can construct a CXp-Forest, as shown in~\cref{fig:rf01}, as follows:
\begin{enumerate}[leftmargin=2em]
\item
For each CXp $\fml{Y}_i$ with size $k$, build a corresponding DT $T_i$ consisting of
$k$ non-leaf nodes and $k+1$ leaf nodes.
Each feature $j$ in $\fml{Y}_i$ is mapped to exactly one non-leaf node of $T_i$ labeled $j$.
\item
The edge connecting a non-leaf node to its right child indicates that $j\in\fml{S}$,
while the edge connecting it to its left child indicates that $j\not\in\fml{S}$.
\item
The right child of each non-leaf node is a leaf node labeled with 1,
while the left child is either another non-leaf node or a leaf node 0.
\end{enumerate}
For CXp $\fml{Y}_{1} = \{1,2\}$, it covers one AEx: $\{(1,0,2)\}$,
CXp $\fml{Y}_{2} = \{1,3\}$ covers four AExs: $\{(0,1,0), (0,1,1), (1,1,0), (1,1,1)\}$,
and CXp $\fml{Y}_{3} = \{2,3\}$ covers two AExs: $\{(2,0,0), (2,0,1)\}$.
That is why the weights of the three CXp-tree is $w_1=1$, $w_2=4$, and $w_3=2$, respectively.
\end{example}

\begin{figure}[t]
\begin{center}
\scalebox{.76}{\forestset{
  my edge/.style={
    edge path={
      \noexpand\path [draw, \forestoption{edge}] (!u.parent anchor) -- +(0,-5pt) -| (.child anchor)\forestoption{edge label};
    },
  },
  BDT/.style={
    for tree={
      l=1.5cm, s sep=1.15cm,
      if n children=0{}{circle},
      draw=midblue,
      text=midblue,
      edge={my edge},
      edge=thick,
    },
  },
}

\begin{forest}
  BDT
  [{$1$}, label={[yshift=-6.85ex]{{\tiny1}}}
    [{$2$}, label={[yshift=-6.85ex]{{\tiny2}}},
      edge label={node[midway,left,xshift=-1.5pt] {{\scriptsize$\not\in\fml{S}$}}}
      [\dghlight{\textbf{0}}, label={[yshift=-5.125ex]{{\tiny4}}},
        edge label={node[midway,left,xshift=-0.5pt]
          {{\scriptsize$\not\in\fml{S}$}}}, rectangle]
      [\dghlight{\textbf{1}}, label={[yshift=-5.125ex]{{\tiny5}}},
        edge={very thick,draw=darkred}, edge label={node[midway,right,xshift=-0.575pt] {{\scriptsize$\in\fml{S}$}}}, rectangle]
    ]
    [\dghlight{\textbf{1}}, label={[yshift=-5.125ex]{{\tiny3}}},
      edge={very thick,draw=darkred}, edge label={node[midway,right,xshift=0.5pt] {{\scriptsize$\in\fml{S}$}}},
      rectangle]
  ]
  \node [below=2.1cm of current bounding box.center] {$T_1$ built from $\fml{Y}_{1} = \{1,2\}$};
  \node [below=2.4cm of current bounding box.center] {$w_1 = \#\aex(\fml{Y}_{1}) = 1$};
\end{forest}

\begin{forest}
  BDT
  [{$1$}, label={[yshift=-6.85ex]{{\tiny6}}}
    [{$3$}, label={[yshift=-6.85ex]{{\tiny7}}},
      edge label={node[midway,left,xshift=-1.5pt] {{\scriptsize$\not\in\fml{S}$}}}
      [\dghlight{\textbf{0}}, label={[yshift=-5.125ex]{{\tiny9}}},
        edge label={node[midway,left,xshift=-0.5pt]
          {{\scriptsize$\not\in\fml{S}$}}}, rectangle]
      [\dghlight{\textbf{1}}, label={[yshift=-5.125ex]{{\tiny10}}},
        edge={very thick,draw=darkred}, edge label={node[midway,right,xshift=-0.575pt] {{\scriptsize$\in\fml{S}$}}}, rectangle]
    ]
    [\dghlight{\textbf{1}}, label={[yshift=-5.125ex]{{\tiny8}}},
      edge={very thick,draw=darkred}, edge label={node[midway,right,xshift=0.5pt] {{\scriptsize$\in\fml{S}$}}},
      rectangle]
  ]
  \node [below=2.1cm of current bounding box.center] {$T_2$ built from $\fml{Y}_{2} = \{1,3\}$};
  \node [below=2.4cm of current bounding box.center] {$w_2 = \#\aex(\fml{Y}_{2}) = 4$};
\end{forest}

\begin{forest}
  BDT
  [{$2$}, label={[yshift=-6.85ex]{{\tiny11}}}
    [{$3$}, label={[yshift=-6.85ex]{{\tiny12}}},
      edge label={node[midway,left,xshift=-1.5pt] {{\scriptsize$\not\in\fml{S}$}}}
      [\dghlight{\textbf{0}}, label={[yshift=-5.125ex]{{\tiny14}}},
        edge label={node[midway,left,xshift=-0.5pt]
          {{\scriptsize$\not\in\fml{S}$}}}, rectangle]
      [\dghlight{\textbf{1}}, label={[yshift=-5.125ex]{{\tiny15}}},
        edge={very thick,draw=darkred}, edge label={node[midway,right,xshift=-0.575pt] {{\scriptsize$\in\fml{S}$}}}, rectangle]
    ]
    [\dghlight{\textbf{1}}, label={[yshift=-5.125ex]{{\tiny13}}},
      edge={very thick,draw=darkred}, edge label={node[midway,right,xshift=0.5pt] {{\scriptsize$\in\fml{S}$}}},
      rectangle]
  ]
  \node [below=2.1cm of current bounding box.center] {$T_3$ built from $\fml{Y}_{3} = \{2,3\}$};
  \node [below=2.4cm of current bounding box.center] {$w_3 = \#\aex(\fml{Y}_{3}) = 2$};
\end{forest}}
\end{center}
\label{fig:cxpf01}
\caption{CXp-Forest for $\fml{E}=(\kappa, ((2,1,2),1))$.}
\label{fig:rf01}
\end{figure}
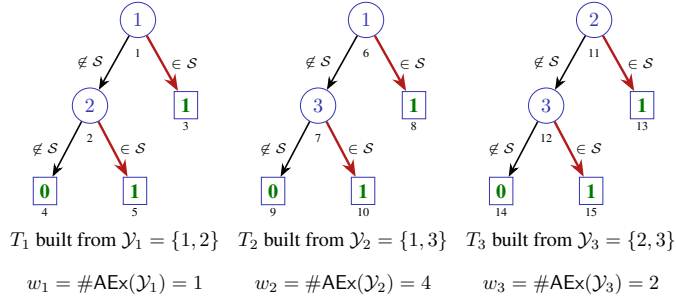

The weight $w_i$ of each CXp-tree comes from the quantity of AExs covered by that CXp.
It is important to note that an AEx cannot contribute
to the weight of more than one CXp-tree.
For example, suppose $\fml{Y}_{1} = \{1,2,3,4\}$ and $\fml{Y}_{2} = \{3,4,5\}$
are two CXps, and there is an AEx covered by these two CXps.
Then such AEx can be represented as
$(x_1 = v_1, x_2 = v_2, \star, \star, x_5 = v_5, \dots, x_m = v_m)$,
where $\star$ denotes any possible value in the domain $\mbb{D}_i$.
This implies that such AEx are covered by the weak CXp $\{3, 4\}$,
which is a contradiction.
Therefore, the set of AExs covered by two distinct CXps are disjoint.


\paragraph{Complexity of building a CXp-Forest.}
We analyze the complexity of building a CXp-Forest.
Given an oracle $\fml{O}_{\mbb{C}}(\fml{E})$ for computing $\mbb{C}(\fml{E})$,
and another oracle $\fml{O}_{\#}(\fml{E},\fml{Y})$ for measuring the quantity
of AExs covered by an CXp $\fml{Y}$, we can prove the following results.

\begin{restatable}{proposition}{PropComCXpF2} \label{prop:cxpf02}
The CXp-Forest can be constructed in polynomial time
given access to an $\fml{O}_{\mbb{C}}(\fml{E})$ and an $\fml{O}_{\#}(\fml{E},\fml{Y})$,
with a polynomial number of calls to an $\fml{O}_{\#}(\fml{E},\fml{Y})$.
\end{restatable}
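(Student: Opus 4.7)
The plan is to give an explicit construction algorithm and then bound its running time and the number of oracle calls directly.

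First, I would invoke the oracle $\fml{O}_{\mbb{C}}(\fml{E})$ once to obtain the set $\mbb{C}(\fml{E}) = \{\fml{Y}_1, \ldots, \fml{Y}_n\}$. This fixes the parameter $n$ against which polynomiality is measured; note that the output size of the CXp-Forest is itself $\Omega(n)$, so treating $n$ as part of the input to the construction is legitimate even though $n$ can be exponential in $|\fml{F}|$ in the worst case (as pointed out in the footnote).

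Next, for each $\fml{Y}_i \in \mbb{C}(\fml{E})$, I would construct the binary decision tree $T_i$ following exactly the template described in Example 3.2: enumerate the features of $\fml{Y}_i$ in some fixed order, create one non-leaf node per feature labelled with that feature, connect the right child of each such node to a leaf labelled $1$, and connect the left child either to the next non-leaf node or, at the end, to a leaf labelled $0$. This yields a tree with $|\fml{Y}_i|$ non-leaf nodes and $|\fml{Y}_i|+1 \leq m+1$ leaf nodes, built in $O(m)$ time. In parallel I would issue one call to $\fml{O}_{\#}(\fml{E}, \fml{Y}_i)$ to obtain the weight $w_i$, and attach it to $T_i$. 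Assembling the $n$ weighted trees and the normalisation factor $1/n$ into the linear combination of \eqref{eq:ch} completes the construction.

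For correctness I would check that $T_i$ evaluated on an input set $\fml{S}$ returns $w_i$ if $\fml{S} \cap \fml{Y}_i \neq \emptyset$ and $0$ otherwise, which is immediate from the tree structure: the only path leading to a $0$-leaf is the one where every feature of $\fml{Y}_i$ is tested to lie outside $\fml{S}$, and any deviation from that path encounters a $1$-leaf on the right branch. Summing over $i$ and scaling by $1/n$ reproduces $\cf_{r}(\fml{S};\fml{E})$ exactly.

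For complexity, the total work is one call to $\fml{O}_{\mbb{C}}$, exactly $n$ calls to $\fml{O}_{\#}$, and $O(nm)$ additional arithmetic and bookkeeping for building the $n$ trees — all polynomial in the size of the output $\mbb{C}(\fml{E})$ and in $m$. The only place requiring care is to state the complexity relative to $|\mbb{C}(\fml{E})|$ rather than $|\fml{F}|$; I do not see a genuine obstacle, since both the number of oracle calls and the tree-construction work scale linearly in $n$ by design.
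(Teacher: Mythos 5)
Your construction is correct and is exactly the argument the paper intends: one call to $\fml{O}_{\mbb{C}}$, one call to $\fml{O}_{\#}$ per CXp (so $n$ calls), and $O(nm)$ work to assemble the trees of Example 3.2, all polynomial in the size of the forest being output. The paper in fact gives no explicit proof of this proposition (only the decision-tree specialisation is proved in the appendix), so your writeup simply makes explicit the construction the paper treats as immediate, including the correct observation that polynomiality must be measured against $n=|\mbb{C}(\fml{E})|$.
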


\begin{restatable}{proposition}{PropComCXpDt} \label{prop:dt}
For decision trees, a CXp-Forest can be constructed
in polynomial time with respect to the number of paths in the given decision trees.
\end{restatable}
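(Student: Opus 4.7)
The plan is to reduce the CXp-Forest construction to two subroutines, each of which I will show runs in time polynomial in the number of paths $P$ of the decision tree: (i) enumeration of $\mbb{C}(\fml{E})$, and (ii) computation of each weight $w_i = \#\aex(\fml{Y}_i)$. Given these, the forest is assembled by building one binary DT of size at most $O(m)$ per CXp, which is subsumed by the cost of (ii).

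For (i), I would exploit the structural fact that any AEx $\mbf{x}$ traverses a unique root-to-leaf path $\pi$ of the decision tree; the features on which $\mbf{x}$ disagrees with $\mbf{v}$ must lie in the ``disagreement set'' $D_\pi$, defined as the features tested on $\pi$ whose branch constraint excludes $v_i$. Hence every CXp is contained in $D_\pi$ for some different-class path, and by subset-minimality of CXps among weak CXps, $\mbb{C}(\fml{E})$ is obtained by collecting $D_\pi$ over the at most $P$ different-class paths and retaining the minimal members. This yields $|\mbb{C}(\fml{E})| \le P$ and runs in $O(P^2 m)$ time.

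For (ii), fix a CXp $\fml{Y}$. As the excerpt already observes, AExs covered by distinct CXps are disjoint; combined with the paper's convention that AEx refers to $l_0$-minimal AExs and with the subset-minimality of $\fml{Y}$, an AEx covered by $\fml{Y}$ is precisely a point $\mbf{x}$ with $\kappa(\mbf{x}) \neq \kappa(\mbf{v})$ that differs from $\mbf{v}$ on \emph{exactly} the features of $\fml{Y}$. To count such points I would iterate over the different-class paths $\pi$; each $\pi$ contributes a product over features whose factors are determined by the path constraint, the domain $\mbb{D}_i$, membership in $\fml{Y}$, and whether $v_i$ is consistent with $\pi$. Each factor is computable in $O(1)$, so one path is processed in $O(m)$, a single CXp in $O(mP)$, and all weights together in $O(mP^2)$.

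The main obstacle I anticipate is formalizing the ``differs on exactly $\fml{Y}$'' characterization in step (ii): one must argue rigorously that it captures precisely the AExs covered by $\fml{Y}$, using both the definition of $l_0$-minimal AEx and the subset-minimality of $\fml{Y}$ to exclude points that would belong to a strictly smaller CXp. Once this equivalence is in place, the remaining accounting is a routine sum of products and the overall construction is polynomial in $P$.
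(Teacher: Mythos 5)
Your proposal is correct and follows essentially the same route as the paper's proof: enumerate the (polynomially many) CXps from the different-class root-to-leaf paths, then obtain each weight by counting, path by path, the distinguishable points that agree with $\mathbf{v}$ outside the CXp --- the paper phrases this as restricting and relabelling the tree before counting per path, while you compute the same quantity as a sum over paths of per-feature products, which is only a cosmetic difference. The one step to tighten is your directional claim that a witness's disagreement features ``must lie in $D_\pi$'': for an arbitrary adversarial point this fails (it may differ on features whose path constraints still admit $v_i$), and the correct argument combines $D_\pi\subseteq\fml{Y}$ (forced disagreements) with subset-minimality of $\fml{Y}$ to conclude $\fml{Y}=D_\pi$, exactly as in the cited polynomial CXp-enumeration result for decision trees.
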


\section{\xhsc (\underline{A}dversarial e\underline{x}amples-based \underline{F}eature \underline{i}mportance) Scores} \label{sec:score}
This section presents two novel rigorous feature importance scores called \emph{\xhsc scores}, denoted as $\fs$.
These scores are based on the Shapley value and Banzhaf index.
To compute these feature importance scores, we pick the previously defined characteristic function~\cref{eq:ch}.
Then, the novel feature importance scores are formally defined as follows:
\begin{definition}[Shapley-like \xhsc scores]
\begin{equation} \label{eq:defo_s}
\fsn{S}(i) := \svn{S}(i;\fml{E}_r, \cf_{r})
\end{equation}
\end{definition}

\begin{definition}[Banzhaf-like \xhsc scores]
\begin{equation} \label{eq:defo_b}
\fsn{B}(i) := \svn{B}(i;\fml{E}_r, \cf_{r})
\end{equation}
\end{definition}
$\fs(i) > \fs(j)$ indicates that feature $i$ is more important than feature $j$, taking into account:
1) the quantity of additional AExs excluded by adding feature $i$ to the coalition $\fml{S}$, and  
2) the size of the coalition $\fml{S}$.

\subsection{Novel Properties for Feature Importance Scores} \label{ssec:npp}
Game-theoretic feature attribution methods satisfy a set of favourable properties.
In this paper, we consider the following properties:
\emph{Efficiency}, \emph{Symmetry}, \emph{Additivity}, \emph{Null player},
\emph{AXp-minimal monotonicity}, \emph{$\gamma$-efficiency}.
(See \cref{app:pfis} for the formal definitions of these properties.)
Next, we define new properties that feature importance scores should respect.

\paragraph{Independence from the model's output.}
Given two models $\fml{M}=(\fml{F},\mbb{F},\mbb{T},\tau)$, 
$\fml{M}'=(\fml{F},\mbb{F},\mbb{T}',\tau')$, where the elements of
$\mbb{T}'$ are related with the elements of $\mbb{T}$ by a bijection
$\mu:\mbb{T}\to\mbb{T}'$, such that
$\forall(\mbf{x}\in\mbb{F}).(\tau'(\mbf{x})=\mu(\tau(\mbf{x})))$.
For $\fml{M}$ and $\fml{M}'$, the respective explanation problems are
$\fml{E}$ and $\fml{E}'$.
Let $\sv$ and $\sv'$ represent two scores defined, respectively, on
$\fml{E}$ and $\fml{E}'$.
Then, $\sv$ respects independence from model's output if
$\forall(i\in\fml{F}).(\sv(i;\fml{E})=\sv'(i;\fml{E}'))$.

\paragraph{CXp-minimal monotonicity.}
If the set of CXps for feature $i$ is included in the set of
CXps for feature $j$, then the score for $i$ should be no greater than
the score for $j$,
\[
\forall(i,j\in\fml{F}).(\mbb{C}_i\subseteq\mbb{C}_j)\limply\sv(i;\fml{E},\cf)\le\sv(j;\fml{E},\cf).
\]

\paragraph{Consistency with relevancy.}
A score is consistent with relevancy if, for any explanation problem
$\fml{E}$, a feature takes a non-zero score iff it is relevant.


\subsection{Properties and Complexity} \label{ssec:pscore}

We investigate which of these desirable properties are satisfied by $\fsn{S}$ and $\fsn{B}$.
\begin{restatable}{proposition}{PropIrrRel} \label{prop:irrrel}
For any feature $j\in\fml{F}$,
if $j$ is irrelevant to $\fml{E}$ then $\fs(j) = 0$;
otherwise, $\fs(j) > 0$.
\end{restatable}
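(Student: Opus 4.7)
The plan is to establish the two directions by exploiting a basic monotonicity property of the characteristic function $\cf_r$ defined in \eqref{eq:ch}, and then reading off the value of $\fs(j)$ from the Shapley/Banzhaf formulas \eqref{eq:sv}--\eqref{eq:bv}. The key observation is that for any $\fml{S}\subseteq\fml{F}\setminus\{j\}$ and any CXp $\fml{Y}_i\in\mbb{C}(\fml{E})$, the condition $\fml{S}\cap\fml{Y}_i\neq\emptyset$ implies $(\fml{S}\cup\{j\})\cap\fml{Y}_i\neq\emptyset$. Since each weight $w_i$ is strictly positive (every CXp covers at least one AEx), this gives $\cf_r(\fml{S}\cup\{j\})\geq\cf_r(\fml{S})$, hence $\Delta_j(\fml{S};\fml{E},\cf_r)\geq 0$ for every $\fml{S}$. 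The Shapley and Banzhaf coefficients in \eqref{eq:sv} and \eqref{eq:bv} are all strictly positive, so $\fs(j)$ is a nonnegative linear combination of nonnegative marginals.

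For the forward direction, suppose $j$ is irrelevant, i.e.\ $j\notin\mathfrak{F}(\fml{E})$. Using the paper's remark that $\mathfrak{F}(\fml{E})$ is unchanged when CXps are used in place of AXps, this means $j$ occurs in no CXp. Then for every $\fml{S}\subseteq\fml{F}\setminus\{j\}$ and every $\fml{Y}_i\in\mbb{C}(\fml{E})$, the intersections $\fml{S}\cap\fml{Y}_i$ and $(\fml{S}\cup\{j\})\cap\fml{Y}_i$ coincide, so each ITE term in \eqref{eq:ch} is unchanged. Hence $\Delta_j(\fml{S};\fml{E},\cf_r)=0$ for all $\fml{S}$, and the sums in \eqref{eq:defo_s} and \eqref{eq:defo_b} vanish, giving $\fs(j)=0$.

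For the converse, suppose $j$ is relevant. Then $j$ belongs to some CXp $\fml{Y}_k\in\mbb{C}(\fml{E})$. Evaluating the marginal at $\fml{S}=\emptyset$ gives $\cf_r(\emptyset)=0$, while
\[
\cf_r(\{j\};\fml{E}) \;=\; \frac{1}{n}\sum_{\fml{Y}_i\ni j} w_i \;\geq\; \frac{w_k}{n} \;>\; 0,
\]
so $\Delta_j(\emptyset;\fml{E},\cf_r)>0$. Combined with the monotonicity above and the strictly positive coefficient multiplying $\Delta_j(\emptyset)$ in both \eqref{eq:sv} and \eqref{eq:bv}, the sum defining $\fs(j)$ has a strictly positive contribution from the empty-set term and nonnegative contributions from all other terms, yielding $\fs(j)>0$ for both $\fsn{S}$ and $\fsn{B}$.

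The only subtle point is the appeal to the AXp/CXp duality to translate ``irrelevant'' (defined via AXps) into ``missing from every CXp''; this is cited directly from the paper and is the one fact I do not re-prove. All other steps are short and mechanical, so I do not anticipate any real obstacle.
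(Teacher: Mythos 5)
Your proof is correct and follows essentially the same route as the paper's own (much terser) argument: irrelevant features appear in no CXp so every marginal vanishes, while for a relevant feature all marginals are nonnegative and $\Delta_j(\emptyset)>0$, forcing a strictly positive score. Your version is more careful than the paper's, in particular in making explicit the appeal to the AXp/CXp duality for $\mathfrak{F}(\fml{E})$ and the assumption $w_i>0$, both of which the paper leaves implicit.
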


\begin{restatable}{proposition}{PropSh} \label{prop:pp_s}
$\fsn{S}$ satisfies the properties: Efficiency, Symmetry, Additivity, Null player, 
$\gamma$-Efficiency, 
CXp-minimal monotonicity, and Consistency with relevancy.
\end{restatable}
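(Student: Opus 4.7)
The plan is to split the seven properties into two groups: those that follow from the classical axiomatic theory of the Shapley value applied to an arbitrary characteristic function, and those that exploit the specific combinatorial structure of $\cf_{r}$.

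First, the axioms Efficiency, Symmetry, Additivity, and Null player hold for the Shapley value computed via \cref{eq:sv} on any well-defined real-valued set function $\cf$ with $\cf(\emptyset)=0$. Since $\cf_{r}(\emptyset)=0$ by construction, these four properties are inherited by $\fsn{S}=\svn{S}(\cdot;\fml{E}_r,\cf_{r})$, and I only need to invoke this inheritance (the Null player verification additionally uses that a feature appearing in no CXp contributes a zero marginal to every coalition, which is clear from the definition of $\cf_{r}$). Consistency with relevancy is \cref{prop:irrrel} specialised to $\fsn{S}$. For $\gamma$-Efficiency, I would evaluate $\cf_{r}(\fml{F})=\frac{1}{n}\sum_{k=1}^{n}w_k$ (every CXp intersects the full feature set) and $\cf_{r}(\emptyset)=0$, then apply classical Efficiency to identify $\sum_{i\in\fml{F}}\fsn{S}(i)$ with the quantity demanded by $\gamma$-Efficiency.

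The core structural step is CXp-minimal monotonicity. I would first establish the marginal identity
\[
\Delta_i(\fml{S};\fml{E}_r,\cf_{r}) \;=\; \frac{1}{n}\sum_{\substack{\fml{Y}_k\in\mbb{C}_i(\fml{E})\\ \fml{S}\cap\fml{Y}_k=\emptyset}} w_k,
\]
that is, the marginal contribution of adding $i$ to $\fml{S}$ equals the total weight of CXps containing $i$ that $\fml{S}$ has not yet intersected. Given $\mbb{C}_i\subseteq\mbb{C}_j$, I would then reindex the two Shapley sums by $\fml{S}\subseteq\fml{F}\setminus\{i,j\}$, pairing each such $\fml{S}$ with its sibling $\fml{S}\cup\{\text{other feature}\}$ so that matching terms share the same Shapley coefficient. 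For $\fml{S}\subseteq\fml{F}\setminus\{i,j\}$, the inclusion $\mbb{C}_i\subseteq\mbb{C}_j$ directly gives $\Delta_i(\fml{S})\le\Delta_j(\fml{S})$. For the paired terms $\fml{S}\cup\{j\}$ versus $\fml{S}\cup\{i\}$, every $\fml{Y}_k\in\mbb{C}_i$ also lies in $\mbb{C}_j$ and is already hit by $j\in\fml{S}\cup\{j\}$, so $\Delta_i(\fml{S}\cup\{j\})=0\le\Delta_j(\fml{S}\cup\{i\})$. Summing the two term-wise inequalities against the common Shapley weights yields $\fsn{S}(i)\le\fsn{S}(j)$.

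The main obstacle is precisely this term-by-term pairing argument, since the Shapley sums for $i$ and $j$ naively range over different families of subsets ($\fml{F}\setminus\{i\}$ and $\fml{F}\setminus\{j\}$), and one must be careful to match coalitions of equal size so that the Shapley coefficients $\tfrac{1}{|\fml{F}|\binom{|\fml{F}|-1}{|\fml{S}|}}$ cancel. Once the marginal identity is available and the bijective pairing is set up, the two key inequalities fall out directly from $\mbb{C}_i\subseteq\mbb{C}_j$, and all remaining properties are either classical or already proved.
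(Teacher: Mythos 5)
Your proposal is correct and follows essentially the same route as the paper's proof: the four classical axioms are inherited from the Shapley value, $\gamma$-efficiency follows from Efficiency via $\cf_{r}(\fml{F})-\cf_{r}(\emptyset)$ (your value $\tfrac{1}{n}\sum_k w_k$ matches \cref{prop:comp_s}, and is more precise than the $\gamma=1$ stated in the paper's proof), consistency with relevancy is \cref{prop:irrrel}, and CXp-minimal monotonicity is the same marginal-contribution case analysis, which you carry out more explicitly via the coalition pairing. No gaps.
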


\begin{restatable}{proposition}{PropBz} \label{prop:pp_b}
$\fsn{B}$ satisfies the properties: Symmetry, Additivity, Null player, 
$\gamma$-Efficiency, 
CXp-minimal monotonicity, and Consistency with relevancy.
\end{restatable}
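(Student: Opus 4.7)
The approach is to first derive a closed-form expression for $\fsn{B}(i)$ under the CXp-Forest characteristic function $\cf_r$, and then read each of the six properties off that expression. Because $\cf_r$ is the weighted indicator sum in~\eqref{eq:ch}, the marginal contribution $\Delta_i(\fml{S};\fml{E}_r,\cf_r)$ collapses to $\tfrac{1}{n}\sum w_k$ taken over those $\fml{Y}_k\in\mbb{C}_i(\fml{E})$ that are disjoint from $\fml{S}$. Substituting into~\eqref{eq:bv} and swapping the order of summation gives
\[
\fsn{B}(i) \;=\; \frac{1}{n\cdot 2^{|\fml{F}|-1}} \sum_{\fml{Y}_k\in\mbb{C}_i(\fml{E})} w_k \cdot 2^{|\fml{F}|-|\fml{Y}_k|},
\]
since the subsets $\fml{S}\subseteq\fml{F}\setminus\{i\}$ that miss a fixed $\fml{Y}_k\ni i$ are precisely the $2^{|\fml{F}|-|\fml{Y}_k|}$ subsets of $\fml{F}\setminus\fml{Y}_k$.

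From this closed form, several clauses are immediate. Additivity follows from the linearity of $\Delta_i$ in $\cf$, a general fact about the Banzhaf index. Symmetry follows because if $i$ and $j$ are interchangeable under $\cf_r$, then swapping them leaves every term of~\eqref{eq:bv} (and of the closed form) invariant. For Null player, if $\Delta_i(\fml{S})=0$ for every $\fml{S}$, then in particular $\Delta_i(\emptyset)=0$, forcing $\mbb{C}_i(\fml{E})=\emptyset$, and the closed form yields $\fsn{B}(i)=0$. The same observation gives Consistency with relevancy: $\fsn{B}(i)>0$ iff $\mbb{C}_i(\fml{E})\neq\emptyset$, which is equivalent to $i\in\mathfrak{F}(\fml{E})$ via the AXp/CXp duality, recovering Proposition~\ref{prop:irrrel}.

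CXp-minimal monotonicity then drops straight out of the closed form: since every $w_k$ and every factor $2^{|\fml{F}|-|\fml{Y}_k|}$ is non-negative, $\mbb{C}_i\subseteq\mbb{C}_j$ implies the sum for $j$ dominates that for $i$ term by term, hence $\fsn{B}(i)\le\fsn{B}(j)$. For $\gamma$-Efficiency I plan to sum $\fsn{B}(i)$ over $i\in\fml{F}$ and, interchanging the order of summation so that each CXp $\fml{Y}_k$ is counted once per feature it contains, obtain $\tfrac{1}{n\cdot 2^{|\fml{F}|-1}}\sum_k w_k\,|\fml{Y}_k|\,2^{|\fml{F}|-|\fml{Y}_k|}$, which I then match against the quantity $\gamma$ defined in~\cref{app:pfis}. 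The main obstacle is precisely this last step: unlike the Shapley case, Banzhaf does not obey classical Efficiency, so the aggregate above must be carefully reconciled with the formal definition of $\gamma$, and any discrepancy between the two has to be absorbed by the normalization constant in~\eqref{eq:ch}. All remaining clauses are mechanical consequences of the closed-form identity.
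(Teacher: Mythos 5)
Your proposal is correct and follows essentially the same route as the paper: the closed form you derive is exactly \cref{prop:comp_b} (note $w_k\,2^{|\fml{F}|-|\fml{Y}_k|}/2^{|\fml{F}|-1}=w_k/2^{|\fml{Y}_k|-1}$), and the paper likewise obtains Symmetry, Additivity and Null player from the linearity of $\Delta_i$ and reads CXp-minimal monotonicity, Consistency with relevancy and $\gamma$-efficiency off that formula. The one ``obstacle'' you anticipate is not one: in the paper's definition, $\gamma$-efficiency only requires that $\sum_{i}\fsn{B}(i)$ equal \emph{some} instance-dependent constant $\gamma(;\fml{E})$, so the aggregate $\tfrac{1}{n}\sum_k w_k|\fml{Y}_k|/2^{|\fml{Y}_k|-1}$ you compute simply \emph{is} $\gamma$ --- there is nothing to reconcile with a pre-specified value or to absorb into the normalization.
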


Furthermore, the exact computation of $\fsn{S}$
and $\fsn{B}$ for each relevant feature, 
and the computational complexity of $\fsn{S}$ and $\fsn{B}$
can be proved.
\begin{restatable}{proposition}{PropComS} \label{prop:comp_s}
For an arbitrary feature $j\in\mathfrak{F}(\fml{E})$,
$\fsn{S}(j) = \frac{1}{n} \sum_{\fml{Y}_{i}\in\mbb{C}(\fml{E}), j\in\fml{Y}_{i}} \frac{w_i}{|\fml{Y}_{i}|}$.
\end{restatable}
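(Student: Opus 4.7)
The plan is to exploit the linear structure of $\cf_r$ together with the Additivity axiom of the Shapley value to reduce the problem to computing the Shapley value of a simple ``OR-game'' for each CXp. Concretely, write $\cf_r = \sum_{\fml{Y}_i \in \mbb{C}(\fml{E})} \cf_{r,i}$, where $\cf_{r,i}(\fml{S}) = (w_i/n)\cdot\tn{ITE}(\fml{S}\cap\fml{Y}_i\neq\emptyset,1,0)$. By Additivity (which is already known to hold for the Shapley value, and which $\fsn{S}$ inherits), it suffices to compute $\svn{S}(j;\fml{E},\cf_{r,i})$ for each $\fml{Y}_i$ and sum over $i$.

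For a fixed $\fml{Y}_i$, I would argue as follows. Any feature $k\notin\fml{Y}_i$ is a null player for $\cf_{r,i}$, since adding $k$ to any coalition does not change whether the coalition meets $\fml{Y}_i$; hence its Shapley contribution is $0$. All features in $\fml{Y}_i$ are mutually symmetric with respect to $\cf_{r,i}$, because $\cf_{r,i}$ depends on $\fml{S}\cap\fml{Y}_i$ only through its non-emptiness. By Efficiency, the Shapley values sum to $\cf_{r,i}(\fml{F})=w_i/n$, and by Symmetry this total is shared equally among the $|\fml{Y}_i|$ members of $\fml{Y}_i$, so each receives $w_i/(n|\fml{Y}_i|)$. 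Summing over every $\fml{Y}_i\ni j$ (the other CXps contribute $0$ since $j$ is null there) yields precisely $\fsn{S}(j) = \frac{1}{n}\sum_{\fml{Y}_i\in\mbb{C}(\fml{E}),\, j\in\fml{Y}_i}\frac{w_i}{|\fml{Y}_i|}$.

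As a self-contained alternative in case the reader prefers a direct derivation, one can start from \cref{eq:sv}, observe that the marginal $\Delta_j(\fml{S};\fml{E},\cf_r)$ equals $\tfrac{1}{n}\sum_{\fml{Y}_i\ni j,\,\fml{S}\cap\fml{Y}_i=\emptyset} w_i$, interchange the two sums, and then count the coalitions $\fml{S}\subseteq\fml{F}\setminus\{j\}$ of size $k$ disjoint from a given $\fml{Y}_i$; there are $\binom{|\fml{F}|-|\fml{Y}_i|}{k}$ such sets. The identity
\[
\sum_{k=0}^{|\fml{F}|-|\fml{Y}_i|}\frac{\binom{|\fml{F}|-|\fml{Y}_i|}{k}}{|\fml{F}|\binom{|\fml{F}|-1}{k}} \;=\; \frac{1}{|\fml{Y}_i|}
\]
(provable by the Beta-function identity $1/\binom{N-1}{k} = (N-1)\int_0^1 t^k(1-t)^{N-2-k}\,dt$ followed by the binomial theorem) collapses the weighted sum to the claimed formula.

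The main obstacle is essentially bookkeeping: justifying that the decomposition respects the characteristic-function framework used in \cref{eq:sv} (since our $\cf_r$ is not normalized with $\cf_r(\emptyset)=0$ in the classical cooperative-game sense only by the convention that the empty intersection term is $0$, which it is). Once that is observed, the symmetry/null-player argument is immediate; the combinatorial identity is the only non-trivial calculation and is well-known.
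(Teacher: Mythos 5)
Your main argument --- decomposing $\cf_{r}$ by Additivity into one OR-game $\cf_{r,i}$ per CXp-tree, zeroing out features outside $\fml{Y}_i$ via the Null-player axiom, and splitting the efficiency total $w_i/n$ equally among the $|\fml{Y}_i|$ symmetric members --- is essentially the paper's own proof, stated with slightly more care about why the other trees contribute nothing. (Your alternative direct combinatorial derivation is also sound, though the Beta-function identity should read $1/\binom{|\fml{F}|-1}{k}=|\fml{F}|\int_0^1 t^k(1-t)^{|\fml{F}|-1-k}\,dt$; as written your exponent and prefactor give $1/\binom{|\fml{F}|-2}{k}$ instead.)
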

\begin{proof}
Suppose $|\mbb{C}(\fml{E})| = n$.
Due to the property of \emph{additivity}, we can compute the local $\fsn{S}(j)$ for each CXp-tree
and then sum them up to obtain the final $\fsn{S}(j)$.
For each CXp-tree built from a CXp $\fml{Y}_{i}$, since the function $w_i \tau_{i}$ is symmetric,
every feature in this CXp-tree will have the same score.
Additionally, since $\cf_{r}(\emptyset) = 0$, $\fsn{S}$ satisfies $\frac{\sum_{i=1}^{n} w_i}{n}$-Efficiency,
while for each CXp-tree, the local $\fsn{S}$ satisfies $\frac{w_i}{n}$-Efficiency.
This means that for any $j \in \fml{Y}_{i}$, its local score is $\frac{w_i}{n |\fml{Y}_{i}|}$.
Thus, we can infer that
\begin{equation} \label{eq:com_s}
\fsn{S}(j) =
\frac{1}{n} \sum_{\fml{Y}_{i}\in\mbb{C}(\fml{E}), j\in\fml{Y}_{i}} \frac{w_i}{|\fml{Y}_{i}|}.
\end{equation}
\end{proof}

\begin{restatable}{proposition}{PropComB} \label{prop:comp_b}
For an arbitrary feature $j\in\mathfrak{F}(\fml{E})$,
$\fsn{B}(j) = \frac{1}{n} \sum_{\fml{Y}_{i}\in\mbb{C}(\fml{E}), j\in\fml{Y}_{i}} \frac{w_i}{2^{|\fml{Y}_{i}|-1}}$,
and $\gamma = \sum_{j \in \mathfrak{F}(\fml{E})} \frac{1}{n} \sum_{\fml{Y}_{i}\in\mbb{C}(\fml{E}), j\in\fml{Y}_{i}} \frac{w_i}{2^{|\fml{Y}_{i}|-1}}$.
\end{restatable}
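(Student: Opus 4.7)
The plan is to mirror the structure of the proof of \cref{prop:comp_s}, but because the Banzhaf index does not satisfy efficiency (only additivity and symmetry), I will compute the per-tree contribution directly by a combinatorial count instead of invoking an efficiency identity.

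First, I would decompose the characteristic function \eqref{eq:ch} as $\cf_{r} = \sum_{i=1}^{n}\cf_{i}$, where $\cf_{i}(\fml{S};\fml{E}) := \tfrac{w_i}{n}\cdot\tn{ITE}(\fml{S}\cap\fml{Y}_{i}\neq\emptyset,1,0)$. Since the Banzhaf index is linear in the characteristic function, additivity gives $\fsn{B}(j) = \sum_{i=1}^{n}\svn{B}(j;\fml{E},\cf_{i})$. For any $i$ with $j\notin\fml{Y}_{i}$, adding $j$ to any coalition never changes $\fml{S}\cap\fml{Y}_{i}$, so $\Delta_{j}(\fml{S};\fml{E},\cf_{i})=0$ for every $\fml{S}$, hence $\svn{B}(j;\fml{E},\cf_{i})=0$. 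Only indices $i$ with $j\in\fml{Y}_{i}$ survive in the sum.

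Next, fix such an $i$ and evaluate $\svn{B}(j;\fml{E},\cf_{i})$ directly from \eqref{eq:bv} and \eqref{eq:def:delta}. Because $j\in\fml{Y}_{i}$, the marginal $\cf_{i}(\fml{S}\cup\{j\})-\cf_{i}(\fml{S})$ equals $\tfrac{w_i}{n}$ exactly when $\fml{S}\cap\fml{Y}_{i}=\emptyset$ (so adding $j$ first creates intersection), and equals $0$ otherwise. The number of $\fml{S}\subseteq\fml{F}\setminus\{j\}$ with $\fml{S}\cap\fml{Y}_{i}=\emptyset$ is the number of subsets of $\fml{F}\setminus\fml{Y}_{i}$, namely $2^{|\fml{F}|-|\fml{Y}_{i}|}$. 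Substituting,
\[
\svn{B}(j;\fml{E},\cf_{i}) \;=\; \frac{2^{|\fml{F}|-|\fml{Y}_{i}|}\cdot\tfrac{w_i}{n}}{2^{|\fml{F}|-1}} \;=\; \frac{w_i}{n\cdot 2^{|\fml{Y}_{i}|-1}}.
\]
Summing over the CXps that contain $j$ yields the claimed closed form for $\fsn{B}(j)$.

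Finally, for $\gamma$, I would invoke \cref{prop:irrrel}: irrelevant features have $\fsn{B}(j)=0$, so $\gamma = \sum_{j\in\fml{F}}\fsn{B}(j) = \sum_{j\in\mathfrak{F}(\fml{E})}\fsn{B}(j)$, which on substitution of the per-feature formula gives the stated double sum. The only subtle step is checking that additivity really does carry the $1/n$ and the weights through correctly; this is routine given the linearity of \eqref{eq:bv} in $\cf$. The main obstacle, as in \cref{prop:comp_s}, is really just being careful that the support of the marginal contribution consists exactly of coalitions disjoint from $\fml{Y}_{i}$, so that the count $2^{|\fml{F}|-|\fml{Y}_{i}|}$ (and not, e.g., $2^{|\fml{F}|-1}$ or something dependent on $j$) is what appears in the numerator.
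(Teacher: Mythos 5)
Your proposal is correct and follows essentially the same route as the paper's proof: decompose $\cf_{r}$ additively into per-CXp-tree games, observe that the marginal contribution of $j$ in the $i$-th game is $\tfrac{w_i}{n}$ exactly on the $2^{|\fml{F}|-|\fml{Y}_{i}|}$ coalitions disjoint from $\fml{Y}_{i}$, and divide by $2^{|\fml{F}|-1}$. Your handling of $\gamma$ via \cref{prop:irrrel} is also the intended argument, just stated more explicitly than in the paper.
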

\begin{proof}
Suppose $|\mbb{C}(\fml{E})| = n$.
Due to the property of \emph{additivity}, we can compute the local $\fsn{B}(j)$ for each CXp-tree
and then sum them up to obtain the final $\fsn{B}(j)$.
For each CXp-tree built from a CXp $\fml{Y}_{i}$, since the function $w_i \tau_{i}$ is symmetric,
every feature in this CXp-tree will have the same score.
Besides, for any $j \in \fml{Y}_{i}$, to compute the local $\fsn{B}(j)$, note that 
$\Delta_{j}(\fml{S}) = \frac{w_i}{n}$ iff $\fml{S} \cap \fml{Y}_{i} = \emptyset$, and $\Delta_{j}(\fml{S}) = 0$ otherwise.
Evidently, there are $2^{|\fml{F}| - |\fml{Y}_{i}|}$ such $\fml{S}$, each with the same
coefficient $\frac{1}{2^{|\fml{F}|-1}}$. Hence, we have
\begin{equation} \label{eq:com_b}
\fsn{B}(j)
= \frac{1}{n} \sum_{\fml{Y}_{i}\in\mbb{C}(\fml{E}), j\in\fml{Y}_{i}} \frac{w_i 2^{|\fml{F}|-|\fml{Y}_{i}|}}{2^{|\fml{F}|-1}}
= \frac{1}{n} \sum_{\fml{Y}_{i}\in\mbb{C}(\fml{E}), j\in\fml{Y}_{i}} \frac{w_i}{2^{|\fml{Y}_{i}|-1}}.
\end{equation}
From which we can derive that
\begin{equation}
\gamma =
\sum_{j \in \mathfrak{F}(\fml{E})} \frac{1}{n}
\sum_{\fml{Y}_{i}\in\mbb{C}(\fml{E}), j\in\fml{Y}_{i}} \frac{w_i}{2^{|\fml{Y}_{i}|-1}}.
\end{equation}
\end{proof}

\begin{example} \label{ex:run01c}
For the CXp-Forest in~\cref{fig:rf01},
consider computing $\fsn{S}$, then we have
$\fsn{S}(1) = \frac{5}{6}$, $\fsn{S}(2) = \frac{1}{2}$, and $\fsn{S}(3) = 1$.
Note that $\fsn{S} = \fsn{B}$ since all CXps have a size less than 3.
(See \cref{app:clac} for the detailed calculations.)
\end{example}

\begin{restatable}{corollary}{PropComplexityA} \label{cor:complexity1}
Given a CXp-Forest, $\fsn{S}$ and $\fsn{B}$
can be computed in polynomial time on the size of the forest.
\end{restatable}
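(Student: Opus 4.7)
The plan is to invoke the closed-form expressions already established in \cref{prop:comp_s,prop:comp_b} and observe that they admit evaluation in time linear in the size of the CXp-Forest. Concretely, for each relevant feature $j$ we have $\fsn{S}(j) = \frac{1}{n}\sum_{\fml{Y}_i\in\mbb{C}(\fml{E}),\, j\in\fml{Y}_i} \frac{w_i}{|\fml{Y}_i|}$ and $\fsn{B}(j) = \frac{1}{n}\sum_{\fml{Y}_i\in\mbb{C}(\fml{E}),\, j\in\fml{Y}_i} \frac{w_i}{2^{|\fml{Y}_i|-1}}$, whereas \cref{prop:irrrel} assigns score $0$ to irrelevant features. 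So nothing beyond what is already encoded in the forest needs to be enumerated or recomputed.

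The algorithm I would use is a single pass. Initialize $\fsn{S}(j)\gets 0$ and $\fsn{B}(j)\gets 0$ for every $j\in\fml{F}$. For each CXp-tree $T_i$, read off its weight $w_i$ and the set of non-leaf labels, which is exactly $\fml{Y}_i$ (so $|\fml{Y}_i|$ equals the number of non-leaf nodes of $T_i$). For every $j$ occurring as a non-leaf label in $T_i$, add $\frac{w_i}{n\,|\fml{Y}_i|}$ to $\fsn{S}(j)$ and $\frac{w_i}{n\,2^{|\fml{Y}_i|-1}}$ to $\fsn{B}(j)$. The total number of updates is exactly the number of non-leaf nodes across the forest, which is $\Theta(\sum_{i}|\fml{Y}_i|)$ and hence linear in the forest's size.

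What remains is to argue that each update is cheap in the bit model, which is where the only genuine subtlety lies. Here $n$ is the number of CXp-trees and each $|\fml{Y}_i|\le m$, so all indices are of polynomial bit length; the weights $w_i$ are part of the input. The factor $2^{|\fml{Y}_i|-1}$ is exponentially large as a magnitude, but as a binary string it has only $|\fml{Y}_i|$ bits, so standard arbitrary-precision arithmetic performs every rational addition in polynomial time, and accumulating at most $n$ such terms still produces outputs of polynomial bit length. Putting the pieces together yields the stated polynomial-time bound in the forest's size for both $\fsn{S}$ and $\fsn{B}$.
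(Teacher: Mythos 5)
Your proposal is correct and follows essentially the same route as the paper, which likewise derives the corollary directly from the closed-form expressions in \cref{prop:comp_s,prop:comp_b}. Your added detail on the single-pass accumulation and the bit-length of the $2^{|\fml{Y}_i|-1}$ denominators is a welcome elaboration of what the paper leaves implicit, but it does not change the argument.
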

\begin{proof}
Given the results of~\cref{prop:dt,prop:comp_s,prop:comp_b}.
\end{proof}
It is important to note that the resulting scores
are independent of the input data distribution, as long as the distribution is a product distribution.
In fact, one can explicitly design the CXp-Forest to cancel the effect of the distribution.

\subsection{\xhsc versus Other Scores} \label{ssec:othsc}

\paragraph{Rigorous feature importance measures versus SHAP.}
It has been shown that the theory behind SHAP scores
may assign low or zero contributions to features that are deemed relevant
from the perspective of logic-based abduction~\cite{hms-ijar24,izza-aaai24}.
A direct implication of this result is that
SHAP scores do not satisfy the property of \emph{Consistency with relevancy}.
Additionally, note that SHAP scores also do not satisfy other properties, 
such as AXp-minimal monotonicity and CXp-minimal monotonicity.

For tabular data and safety-critical applications, such as medical diagnosis,
this inconsistency with relevancy can be problematic.
This deficit can be addressed by replacing the deployment of SHAP scores
with rigorous feature importance measures, including the novel \xhsc scores.
However, a notable downside of these rigorous feature importance measures
is that the number of AXps and CXps can both be exponential.

\paragraph{\xhsc versus rigorous feature importance measures.}
For the existing rigorous feature importance measures: FFA, WFFA~\cite{ignatiev-sat24}, and Responsibility Index~\cite{izza-aaai24},
they satisfy the property of \emph{Consistency with relevancy}, as these scores are based on AXps.
However, complexity-wise, we can prove the following positive results
regarding the computation of \xhsc scores.
\begin{restatable}{proposition}{PropComplexityC} \label{prop:complexity3}
There exists decision trees such that 
computing \xhsc scores is more efficient compared to existing rigorous feature importance scores,
including FFA, WFFA, and Responsibility Index.
\end{restatable}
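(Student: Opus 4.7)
The plan is to exhibit an explicit family of decision trees $\{T_n\}_{n\ge 1}$ on which the CXp-Forest has size $O(n)$ while $|\mbb{A}(\fml{E})|=2^n$; then \cref{prop:dt} together with \cref{cor:complexity1} delivers a polynomial-time algorithm for \xhsc on $T_n$, whereas the existing algorithms for FFA, WFFA, and the Responsibility Index, which enumerate or sum over $\mbb{A}(\fml{E})$, must take time $\Omega(2^n)$ on this family.

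First I would construct $T_n$ over $2n$ binary features $\{x_1,\ldots,x_n,y_1,\ldots,y_n\}$ as a ``zig-zag'' tree: the main spine tests $x_1,x_2,\ldots,x_n$ in order, and whenever $x_i=0$ the left subtree is an auxiliary test of $y_i$ that returns class $0$ if $y_i=1$ and class $1$ if $y_i=0$; the end of the spine (all $x_i=1$) is labeled class $1$. By construction $T_n$ has $2n$ internal nodes and $2n+1$ root-to-leaf paths, so $|T_n|=O(n)$. The target instance is $\mbf{v}_n=(1,\ldots,1,0,\ldots,0)$, for which $T_n$ predicts class $1$.

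Second I would characterize the minimal CXps at $\mbf{v}_n$. A modified assignment $\mbf{x}'$ yields class $0$ only if, writing $i^{\star}=\min\{i:x'_i=0\}$, we have $y'_{i^{\star}}=1$; since $v_{x_i}=1$ and $v_{y_i}=0$ for every $i$, any CXp must include both $x_{i^{\star}}$ and $y_{i^{\star}}$ for some index it covers. Conversely, each pair $\{x_i,y_i\}$ is a weak CXp (set $x_i\gets 0$ and $y_i\gets 1$), and neither singleton alone suffices, so the minimal CXps are exactly the $n$ pairwise disjoint pairs $\{\{x_i,y_i\}:1\le i\le n\}$. The minimal hitting sets of this hypergraph---which are the AXps---are obtained by picking one element from each pair, giving $|\mbb{A}(\fml{E}_n)|=2^n$. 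Applying \cref{prop:dt} and \cref{cor:complexity1} then yields that $\fsn{S}$ and $\fsn{B}$ are computable in time polynomial in $|T_n|$, while the FFA, WFFA, and Responsibility Index procedures of the cited works iterate over $\mbb{A}(\fml{E}_n)$ and so require time $\Omega(2^n)$, establishing the existence claim.

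The step I expect to be hardest is the CXp characterization: verifying that every minimal CXp has the form $\{x_i,y_i\}$ requires both the ``first-zero'' argument and ruling out mixed sets such as $\{x_i,y_j\}$ with $i\ne j$, since fixing $y_j$ to its baseline value $0$ blocks the $j$th side branch from returning class $0$ regardless of what the free features are set to. Once that characterization is locked in, the hitting-set count is elementary and the invocation of \cref{prop:dt} and \cref{cor:complexity1} is immediate.
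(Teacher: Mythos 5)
Your construction is correct, and for FFA and WFFA it is essentially the paper's own argument: both proofs exhibit a linear-size decision tree whose explanation problem has polynomially many CXps but exponentially many AXps, so that the CXp-Forest route of \cref{prop:dt} and \cref{cor:complexity1} runs in polynomial time while any procedure that enumerates $\mbb{A}(\fml{E})$ does not. The paper's gadget produces, per block, the two \emph{overlapping} CXps $\{x_{2i-1},x_{2i}\}$ and $\{x_{2i},y_i\}$ (so each AXp picks $x_{2i}$ or the pair $\{x_{2i-1},y_i\}$), whereas yours produces $n$ \emph{disjoint} pairs; the hitting-set combinatorics and the resulting $2^{n}$ count are the same, and your CXp characterization, including the exclusion of mixed sets $\{x_i,y_j\}$, is sound. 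The genuine divergence is the Responsibility Index. The paper does not use the exponential family for it at all: since $\svn{Resp}(i;\fml{E})$ is determined by a cardinality-minimal AXp containing $i$, and finding a cardinality-minimal AXp of a decision tree is NP-hard, the paper concludes that computing the Responsibility Index is NP-hard for decision trees. Your argument for that score is weaker: on your family every AXp has cardinality exactly $n$ and every feature occurs in at least one, so $\svn{Resp}(i)=1/n$ for every feature and is computable by inspection; the $\Omega(2^{n})$ lower bound you assert holds only for an implementation that blindly enumerates $\mbb{A}(\fml{E}_n)$, not for the quantity itself. To cover all three baselines you should either adopt the paper's NP-hardness argument for the Responsibility Index or explicitly scope that part of the claim to the enumeration-based procedures of the cited works.
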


\paragraph{Unweighted \xhsc scores.}
When all the weights $w_i$ are set to one, we call the computed \xhsc as unweighted \xhsc scores.
We show that, there are connections between unweighted \xhsc scores
and CXp-based Deeghan-Packel~\cite{deegan-ijgt78} scores.
\begin{restatable}{proposition}{PropUSWffa}
Unweighted $\fsn{S}$ is equivalent to CXp-based Deeghan-Packel scores.
\end{restatable}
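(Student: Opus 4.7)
The plan is to observe that this proposition is essentially a direct algebraic consequence of the closed form already established for $\fsn{S}$, combined with the standard definition of the Deegan-Packel power index specialized to CXps as the minimal winning coalitions.

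First, I would recall the Deegan-Packel index. In its classical form, given a simple game whose set of minimal winning coalitions is $\fml{W}$, the Deegan-Packel power of a player $j$ is
\[
\mathsf{DP}(j) \;=\; \frac{1}{|\fml{W}|}\sum_{W\in\fml{W},\,j\in W}\frac{1}{|W|}.
\]
The CXp-based instantiation identifies the minimal winning coalitions with the set of CXps $\mbb{C}(\fml{E})$: a set $\fml{S}$ of features is ``winning'' precisely when its complement is a WAXp, and the subset-minimal winning sets in this sense are exactly the CXps. Thus the CXp-based Deegan-Packel score for feature $j$ is
\[
\mathsf{DP}(j;\fml{E}) \;=\; \frac{1}{|\mbb{C}(\fml{E})|}\sum_{\fml{Y}_{i}\in\mbb{C}(\fml{E}),\,j\in\fml{Y}_{i}}\frac{1}{|\fml{Y}_{i}|}.
\]

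Next, I would invoke \cref{prop:comp_s}, which gives the closed form
\[
\fsn{S}(j) \;=\; \frac{1}{n}\sum_{\fml{Y}_{i}\in\mbb{C}(\fml{E}),\,j\in\fml{Y}_{i}}\frac{w_{i}}{|\fml{Y}_{i}|},
\]
with $n=|\mbb{C}(\fml{E})|$. Specializing to the unweighted CXp-Forest, where by definition $w_{i}=1$ for every CXp-tree, the expression collapses to
\[
\fsn{S}(j) \;=\; \frac{1}{n}\sum_{\fml{Y}_{i}\in\mbb{C}(\fml{E}),\,j\in\fml{Y}_{i}}\frac{1}{|\fml{Y}_{i}|},
\]
which coincides termwise with $\mathsf{DP}(j;\fml{E})$. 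This establishes the claimed equivalence for every $j\in\fml{F}$ (with both scores taking value $0$ on irrelevant features, since no CXp contains them, by \cref{prop:irrrel}).

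The argument is essentially book-keeping once \cref{prop:comp_s} is in place, so I do not anticipate a real technical obstacle. The only subtlety worth spelling out is the correspondence between the game-theoretic notion of minimal winning coalitions and CXps: one should verify that, under the natural simple game where a coalition $\fml{S}\subseteq\fml{F}$ wins iff $\fml{F}\setminus\fml{S}$ fails to be a WAXp (equivalently, iff $\fml{S}$ hits every CXp or contains some CXp as a subset), the minimal winning coalitions are exactly the CXps. This is immediate from the hitting-set duality between AXps and CXps recalled in~\cref{sec:cxpf}, and it is the only point in the proof where care is needed; the rest is direct substitution.
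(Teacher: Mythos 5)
Your proof is correct and takes essentially the same route as the paper's: both specialize the closed form of \cref{prop:comp_s} to $w_i=1$ and observe that the resulting expression coincides termwise with the CXp-based Deegan--Packel score. The extra remark on identifying CXps with minimal winning coalitions is a reasonable clarification but not needed beyond what the paper already does.
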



\section{Preliminary Experimental Results} \label{sec:res}
This section assesses the proposed method on decision trees (classification and regression)~\cite{breiman-bk84},
boosted trees~\cite{chen2016xgboost}, and logistic regression~\cite{hosmer2013applied},
using a range of widely used tabular classification and regression datasets.
Moreover, the proposed method was evaluated in the context of Just-in-Time (JIT) defect prediction
~\cite{kamei2012large,lin2021impact,pornprasit2021pyexplainer}.
Our evaluation involved computing proposed \xhsc scores.
As a baseline, we used the public distribution of the SHAP tool
~\footnote{Available from~\url{https://github.com/slundberg/shap}.}
to compute SHAP scores.
Additionally, we computed WFFA~\cite{ignatiev-sat24}
based on the publicly available source code
~\footnote{\url{https://github.com/ffattr/ffa}}.
We compared different scores from two aspects: 1) the runtime for computing these scores,
and 2) the ranking of feature importance imposed by different scores.
(See~\cref{app:res} for the experimental setup.)
The source code for the implementation is available at \url{https://github.com/XuanxiangHuang/AxFi}.

\paragraph{Comparison of runtime.}
\begin{table*}[ht]
\centering
\scalebox{0.98}{
\begin{tabular}{llccc}
\toprule
Machine Learning models & Dataset & WFFA & \xhsc & SHAP \\
\toprule
\multirow{3}{*}{Classification tree}
& Adult & 0.020 & 0.006 & 26.251 \\
& COMPAS & 0.013 & 0.004 & 4.875 \\
& Recidivism & 0.039 & 0.009 & 25.979 \\
\midrule
\multirow{2}{*}{Regression tree}
& Auto-MPG & 0.003 & 0.002 & 1.506 \\
& Boston House Prices & 0.004 & 0.002 & 17.383 \\
\midrule
\multirow{3}{*}{Boosted tree}
& Diabetes & 0.846 & 0.969 & 0.001 \\
& Vehicle & 432.669 & 492.592 & 0.001 \\
& Wine-Recognition & 0.457 & 0.473 & 0.001 \\
\midrule
\multirow{2}{*}{Logistic regression}
& Openstack & 0.049 & 0.262 & 0.017 \\
& Qt & 0.053 & 0.324 & 0.022 \\
\bottomrule
\end{tabular}
}
\caption{Average runtime of computing WFFA, \xhsc, and SHAP.}
\label{tab:runtime}
\end{table*}

\cref{tab:runtime} presents the runtime performance of computing \xhsc, WFFA, and SHAP.
We can observe that computing SHAP scores is faster than both \xhsc and WFFA scores.
~\footnote{
For decision trees, we use the Kernel SHAP method to compute SHAP scores 
because the Tree SHAP method does not support decision trees trained using Orange3. 
This is why computing SHAP scores for decision trees is slower than \xhsc and WFFA.
}
Moreover, for the dataset \emph{Vehicle}, the runtime of WFFA and \xhsc is much larger than
that of SHAP. Several factors contribute to this result:
1) The number of AXps/CXps is large.
2) Explaining tree ensembles is challenging due to their complexity.
For decision trees, computing \xhsc scores is faster than WFFA scores
because decision trees satisfy the property of polynomial-time model counting~\cite{darwiche-jair02}.
However, since model counting is generally intractable, which holds for both boosted trees and logistic regression models
~\cite{vandenbroeck-aaai21,hm-ijcai24}, computing \xhsc scores is slower than computing WFFA scores.
In the experiments, we estimated the weights of each CXp via sampling instead of performing exact counting.
Specifically, we drew 5000 samples from the space defined by a CXp
and then computed the ratio of AExs to the total data points in this space as the weight.

\paragraph{Comparison of rankings.}
\begin{figure*}[ht]
\centering
\includegraphics[width=\linewidth]{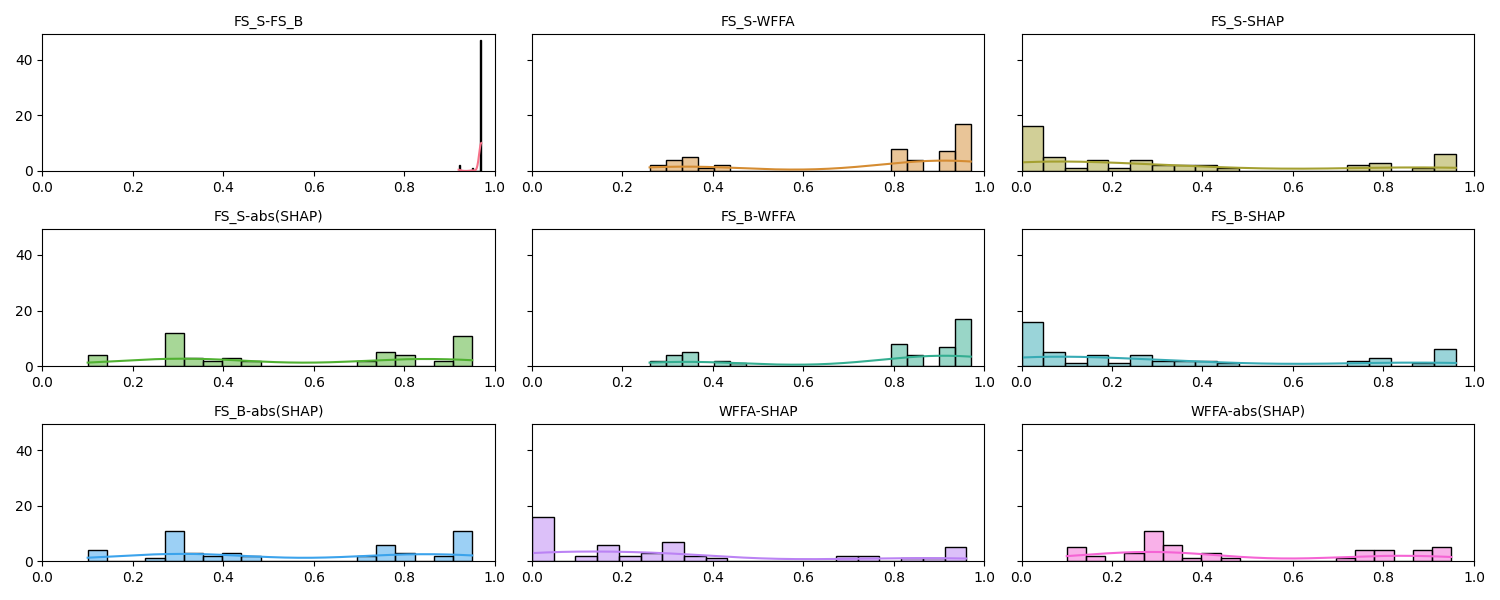}
\caption{Distribution of RBO values for all the tested instances of Recidivism.}
\label{fig:rbo_distri_recidivism0}
\end{figure*}

To compare the ranking of feature importance imposed by different scores, 
we computed the rank-biased overlap (RBO)~\cite{webber2010similarity} for each pair of scores.
RBO is a metric used to measure the similarity between two ranked lists, and is ranged between 0 and 1.
A higher RBO value indicates a greater degree of similarity between the two rankings.
For each dataset, we first computed \xhsc, WFFA, and SHAP scores for all tested samples.
Then extracted the order of feature importance from each score,
and this order was used to compute the RBO values.
For SHAP, we consider two orders: one based on the original SHAP scores and the other based on the absolute values of the SHAP scores.
Moreover, given the fact that most people are interested in top-ranked features,
we set \emph{persistence} to $0.5$ and \emph{depth} to $5$ in our setting, that is,
we put greater weights on the top-5 features.

The distributions of the resulting RBO values for all the test instances from the dataset Recidivism
is shown in~\cref{fig:rbo_distri_recidivism0}.
(See~\cref{app:res} for the distributions of the resulting RBO values for other datasets.)
In addition, nine subfigures depict the the distribution of RBO values
of comparing two different metrics.
For example, the subfigure in the upper-left corner shows the
the distribution of RBO values by comparing $\fsn{S}$ and $\fsn{B}$
(Note that \emph{abs(SHAP)} refers to SHAP scores in absolute values.)

The first observation is that RBO values for $\fsn{S}$ and $\fsn{B}$ often reach 1.0,
that is, top features identified by both metrics are very similar.
The second observation is that the two metrics, $\fs$ and WFFA, identified different top features,
and the distribution of RBO values for these two metrics is generally flat.
Even though the two metrics are based on formal explanations and there is a
minimal hitting set duality between AXps and CXps, the top features can still vary.
Moreover, a similar trend can be observed when comparing $\fs$ with SHAP, and comparing $\fs$ with abs(SHAP).

\paragraph{Comparison of the ranking for a specific instance.}
\begin{figure*}[!ht]
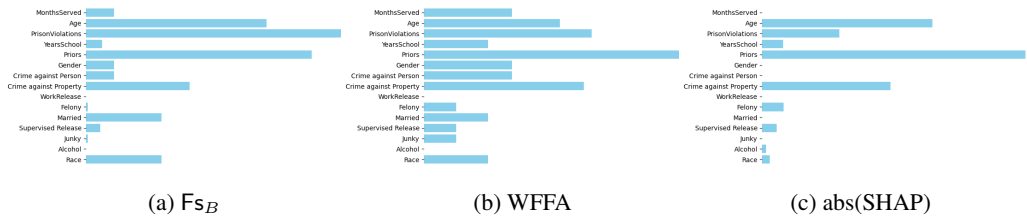

    \centering
    \begin{subfigure}[b]{0.345\textwidth}
        \centering
        \includegraphics[width=\linewidth]{figs/one_data_pt/F\_B\_recidivism\_46}
        \caption{$\fsn{B}$}
    \end{subfigure}
    \hspace{-0.5cm}
    \begin{subfigure}[b]{0.345\textwidth}
        \centering
        \includegraphics[width=\linewidth]{figs/one_data_pt/WFFA\_recidivism\_46}
        \caption{WFFA}
    \end{subfigure}
    \hspace{-0.5cm}
    \begin{subfigure}[b]{0.345\textwidth}
        \centering
        \includegraphics[width=\linewidth]{figs/one_data_pt/SHAP\_recidivism\_46}
        \caption{abs(SHAP)}
    \end{subfigure}
    \caption{
    Explanations for a sample from the Recidivism dataset, where the features are ordered as follows:
    Race, Alcohol, Junky, Supervised Release, Married, Felony, Work Release, Crime Against Property, 
    Crime Against Person, Gender, Priors, Years of School, Prison Violations, Age, Months Served.
    The considered sample is $(\mbf{v}, c) = ((0, 0, 0, 1, 1, 0, 0, 1, 0, 1, 0, 1, 2, 0, 0), 1)$.
    }   
    \label{fig:recidivism_inst}
\end{figure*}

\cref{fig:recidivism_inst} depicts the feature importance for a specific test instance
from Recidivism dataset, where Banzhaf-like \xhsc with WFFA and SHAP (in absolute value) were compared.
All three scores agree that the top four features
are \emph{Age}, \emph{Prison Violations}, \emph{Priors}, and \emph{Crime Against Property}, 
even though the order of these features is not the same.
For the feature \emph{Months Served}, its SHAP score is nearly zero, 
while its Banzhaf-like \xhsc score and WFFA score are not. 
Its WFFA score indicates that this feature is more important than \emph{Married}, 
but its Banzhaf-like \xhsc score indicates the reverse.

Although computing SHAP scores can be very efficient in practice, 
they may assign low or zero contribution to features that are deemed relevant from the perspective of logic-based abduction~\cite{hms-ijar24}.
For example, the features \emph{Crime Against Person} and \emph{Married} have almost zero SHAP scores, 
but their \xhsc and WFFA scores are not very low. 
In high-stakes and safety-critical scenarios, \xhsc can be a reliable measure that offers rigorous guarantees.
Moreover, compared to WFFA, \xhsc can offer a different and informative perspective on feature importance,
making it a valuable alternative and complement to existing approaches.



\section{Limits and Extensions} \label{sec:limit}

\paragraph{Feature dependency in CXp-Forest.}
While we assume feature independence in this paper, 
real-world data often contains dependent features.
A feasible solution is to encode feature dependencies as constraints, 
represent them using CXp-trees, 
and apply their outputs as penalties to reduce the value of the characteristic function.

\paragraph{Exponential number of CXps.}
A possible solution is to compute AXps/CXps with respect to
a subset of the feature space, $\mbb{S} \subseteq \mbb{F}$.
This subset can be formed by considering samples from the feature space
or all points within a small distance of the given instance $\mbf{v}$.


\section{Conclusions and Future Work} \label{sec:conc}
This paper proposes two novel rigorous feature importance scores, 
called \xhsc scores, for quantifying the effectiveness of excluding adversarial examples.
In contrast to earlier work on rigorous feature attribution, 
this paper also considers non-WAXp sets in the vicinity of the target data point.
This enables us to provide more detailed and informative insights regarding feature contribution.
\xhsc scores are computed via a novel characteristic function called contrastive explanation forests.
Moreover, the properties and computational complexity of these scores have been proven.
Compared to SHAP scores, \xhsc offers rigorous guarantees.
Moreover, in the case of decision trees, \xhsc can be computed more efficiently than existing rigorous feature importance measures.
Thus, in high-stakes and safety-critical scenarios, \xhsc serves as a valuable alternative to existing methods.
Future work will focus on addressing the limitations of the proposed methods.


\section*{Acknowledgments}
This work was conducted as part of the DesCartes program and
was supported by the National Research Foundation, Prime Minister’s
Office, Singapore, under the Campus for Research Excellence and
Technological Enterprise (CREATE) program.
This work was supported in part by the Spanish Government under
grant PID2023-152814OB-I00, and by ICREA starting funds.
This work was also supported in part by the AI Interdisciplinary
Institute ANITI, funded by the French program ``Investing for the
Future -- PIA3'' under Grant agreement no.\ ANR-19-PI3A-0004.

\newtoggle{mkbbl}

\settoggle{mkbbl}{false}

\bibliographystyle{plain}

\iftoggle{mkbbl}{
   \bibliography{refs,xtra}
}{
  \input{paper.bibl}
}

\clearpage

\appendix

\section{Proposed Properties for Feature Importance Scores} \label{app:pfis}
These properties originate from two sources: 1) those originally proposed in cooperative games by L. Shapley~\cite{shapley-ctg53} (properties 1 through 4); 
2) those introduced by \cite{izza-aaai24} (properties 5 and 6).
(Readers are also referred to~\cite{algaba2019handbook,izza-aaai24} for further details.)
Let $\sv:\fml{F}\to\mbb{R}$ denote scores or values resulting from the application
of an arbitrary power index (e.g., Banzhaf) in the context of explainability.
\begin{enumerate}[leftmargin=1.5em]
\item \textbf{Efficiency}~
A cooperative game is efficient if,
\[
\sum\nolimits_{i\in\fml{F}}\sv(i;\fml{E},\cf)=\cf(\fml{F})-\cf(\emptyset).
\]
\item \textbf{Symmetry}~
Two features $i,j\in\fml{F}$ are symmetric if
$\cf(\fml{S}\cup\{i\})=\cf(\fml{S}\cup\{j\})$ for
$\fml{S}\subseteq\fml{F}\setminus\{i,j\}$.
A score respects the property of symmetry if, for any $\fml{E}$ and
for any symmetric features $i,j\in\fml{F}$,
$\sv(i;\fml{E},\cf)=\sv(j;\fml{E},\cf)$.
\item \textbf{Additivity}~
Let $\cfn{1}$ and $\cfn{2}$ represent two characteristic functions.
Furthermore, for any $\fml{S}\subseteq\fml{F}$, let
$\cfn{1{+}2}(\fml{S})=\cfn{1}(\fml{S})+\cfn{2}(\fml{S})$.
A score respects the property of additivity if for $i\in\fml{F}$, 
$\svn{1{+}2}(i;\fml{E},\cfn{1{+}2})=\svn{1}(i;\fml{E},\cfn{1})+\svn{2}(i;\fml{E},\cfn{2})$.
\item \textbf{Null player}~
For any features $i\in\fml{F}$, if $\cf(\fml{S})=\cf(\fml{S}\cup\{i\})$
for any $\fml{S}\subseteq\fml{F}$, then $i$ is a \emph{null player}
(or feature).
A score respects the null player property if for any null feature
$i\in\fml{F}$, $\sv(i)=0$.
\item \textbf{AXp-minimal monotonicity}~
If the set of AXps for feature $i$ is included in the set of
AXps for feature $j$, then the score for $i$ should be no greater than
the score for $j$,
\[
\forall(i,j\in\fml{F}).(\mbb{A}_i\subseteq\mbb{A}_j)\limply\sv(i;\fml{E},\cf)\le\sv(j;\fml{E},\cf).
\]
\item \textbf{$\gamma$-efficiency}~
Let $\gamma(;\fml{E})\in\mbb{R}$. A score is $\gamma$-efficient if,
\[
\sum\nolimits_{i\in\fml{F}}(\sv(i;\fml{E},\cf))=\gamma(;\fml{E}).
\]
$\gamma$-efficiency can be viewed as a mechanism of relaxing the
property of efficiency,
and has been the subject of past research~\cite{dubey-mor81}.
\end{enumerate}

\section{Calculations} \label{app:clac}
Calculation of $\cf_{r}(\fml{S})$ and $\Delta(\fml{S})$ for~\cref{ex:run01c}.
\begin{itemize}
\item
$\cf_{r}(\emptyset) = 0$.
\item
$\cf_{r}(\{1\}) = \frac{5}{3}$, $\cf_{r}(\{2\}) = \frac{3}{3}$, $\cf_{r}(\{3\}) = \frac{6}{3}$.
\item
$\cf_{r}(\{1,2\}) = \frac{7}{3}$, $\cf_{r}(\{1,3\}) = \frac{7}{3}$, $\cf_{r}(\{2,3\}) = \frac{7}{3}$.
\item
$\cf_{r}(\{1,2,3\}) = \frac{7}{3}$.
\end{itemize}
\begin{itemize}
\item
$\Delta_{1}(\emptyset) = \frac{5}{3}$, $\Delta_{1}(\{2\}) = \frac{4}{3}$, $\Delta_{1}(\{3\}) = \frac{1}{3}$, $\Delta_{1}(\{2,3\}) = 0$.
\item
$\Delta_{2}(\emptyset) = \frac{3}{3}$, $\Delta_{2}(\{1\}) = \frac{2}{3}$, $\Delta_{2}(\{3\}) = \frac{1}{3}$, $\Delta_{2}(\{1,3\}) = 0$.
\item
$\Delta_{3}(\emptyset) = \frac{6}{3}$, $\Delta_{3}(\{1\}) = \frac{2}{3}$, $\Delta_{3}(\{2\}) = \frac{4}{3}$, $\Delta_{3}(\{1,2\}) = 0$.
\end{itemize}

\section{Proofs} \label{app:proofs}

\PropComCXpDt*
\begin{proof}
We only consider DTs where 1) the tree paths represent a partition of the feature space $\mbb{F}$,
and 2) each point $\mbf{v}\in\mbb{F}$ is consistent with exactly one tree path.
Besides, for each non-leaf node labeled with feature $i$,
suppose its outgoing edge is labeled with a literal $x_i\in\mbb{E}_{i}$,
where $\mbb{E}_{i}\subseteq\mbb{D}_{i}$.

According to~\cite{himms-kr21}, the number of CXps for an explanation
problem define on a DT is polynomial on the number of tree paths.
Let us consider building a CXp-Forest.
Given an explanation problem $\fml{E} = (\fml{M}, (\mbf{v},q))$
where $\fml{M}$ is a DT, and given $\fml{Y}\in\mbb{C}(\fml{E})$.
To measure the quantity of AExs covered by this CXp $\fml{Y}$,
we proceed as follows:
\begin{enumerate}[leftmargin=1.5em]
\item
If a leaf node's label differs from the target label $q$ by more than 
$\delta$ (in classification problem $\delta=0$), change its label to 0.
Otherwise, change its label to 1.
For features in $\fml{F}\setminus\fml{Y}$, fix them to their given values $v_i$.
With these operations, non-leaf nodes having identical successors can be reduced
from $\fml{M}$ and some paths can be dropped from $\fml{M}$,
resulting in a new DT $\fml{M}'$ which is defined on $\fml{Y}$.
\item
Check each non-leaf nodes of $\fml{M}'$ to determine the new feature space $\mbb{F}'$.
If $\fml{M}'$ does not depend on $x_i$, let $\mbb{D}'_{i} = \mbb{D}_{i}$,
otherwise if $\fml{M}'$ depends on $x_i$, collect all literals $x_i\in\mbb{E}_{i}$
and let $\mbb{D}'_{i} = \bigcup_{i} \mbb{E}_{i}$.
\item
For each tree path of $\fml{M}'$, measure the quantity of points that output 0,
and sum them up.
\end{enumerate}
Clearly, each step can be done in polynomial time with respect to the size of the resulting DT $\fml{M}'$.
Hence, building a CXp-Forest
can be done in polynomial time with respect to the number of paths in the given DT.
\end{proof}

\PropIrrRel*
\begin{proof}
The characteristic function $\cf_{r}(\fml{S}; \fml{E})$ is defined on relevant features,
so all irrelevant features will have $\fs(j) = 0$.
For any relevant feature $j$,
it is clear that $\Delta_{j}(\fml{S}) \ge 0$ for any set $\fml{S}$,
and $\Delta_{j}(\emptyset) > 0$ always holds.
Therefore, $\fs(j) > 0$ for any feature $j$ that is relevant to $\fml{E}$.
\end{proof}

\PropSh*
\begin{proof}
We prove the properties one by one:
\begin{enumerate}
\item
$\fsn{S}$ satisfies \emph{Efficiency}, \emph{Symmetry}, \emph{Additivity}, and \emph{Null player},
which follows from the uniqueness of the Shapley value~\cite{shapley-ctg53}.
\item
Let $\fml{E}$ be an explanation problem such that
$\mbb{C}(\fml{E}) = \{\{1\}, \{2,3\}\}$ and $\mbb{A}(\fml{E}) = \{\{1,2\}, \{1,3\}\}$,
then we have $\mbb{A}_{2}\subseteq\mbb{A}_{1}$.
Let  the weight associated with $\{1\}$ be 1,
and the weight associated with $\{2,3\}$ be 5.
Then it is easy to verify that $\fsn{S}(1) = \frac{1}{2}$,
and $\fsn{S}(2) = \frac{5}{4}$,
which means $\fsn{S}(1) < \fsn{S}(2)$.
So $\fsn{S}$ does not satisfy \emph{AXp-minimal monotonicity}.
\item
Since $\fsn{S}$ satisfies \emph{Efficiency}, it also 
satisfies \emph{$\gamma$-Efficiency}, where $\gamma = 1$.
\item
The similarity predicate $\similar$ abstracts away whether the underlying model
implements classification or regression, and the bijection $\mu$ does not affect the similarity predicate.
If the weights of two CXp-Forests (one for $\fml{M}$ and one for $\fml{M}'$) are identical,
then $\fsn{S}$ satisfies \emph{Independence from the model's output}; otherwise, it does not.
\item
Given two features $i,j\in\fml{F}$ such that $\mbb{C}_i\subseteq\mbb{C}_j$,
it means if a CXp-tree contains feature $i$, then it must contain feature $j$.
Given a set $\fml{S}\subseteq\fml{F}$, if $i\not\in\fml{S}$ and $j\not\in\fml{S}$,
then $\Delta_{i}(\fml{S}) \le \Delta_{j}(\fml{S})$.
If $i\in\fml{S}$, then $\Delta_{j}(\fml{S}) \ge 0$.
If $j\in\fml{S}$, then $\Delta_{j}(\fml{S}) = 0$.
Hence, $\fsn{S}(i) \le \fsn{S}(j)$, that is,
$\fsn{S}$ satisfies \emph{CXp-minimal monotonicity}.
\item
$\fsn{S}$ satisfies \emph{Consistency with relevancy}, as implied by~\cref{prop:irrrel}.
\end{enumerate}
\end{proof}

\PropBz*
\begin{proof}
We prove the properties one by one:
\begin{enumerate}
\item
If $\fsn{B}$ satisfies \emph{Efficiency}, then that would falsify Shapley's theorem~\cite{shapley-ctg53},
so $\fsn{B}$ does not satisfy \emph{Efficiency}.
\item
Since the sums for two symmetric features $i$ and $j$ both use all sets (so the same sets),
$\Delta_{i} = \Delta_{j}$ (due to symmetry) and the same coefficient and so $\fsn{B}(i) = \fsn{B}(j)$,
i.e. $\fsn{B}$ satisfies \emph{Symmetry}.
\item
By construction, the sum operator and the $\Delta_{i}(\fml{S})$ are linear.
As the multiplicative constants do not depend on the characteristic function, the score is linear.
So $\fsn{B}$ satisfies \emph{Additivity}.
\item
The summations are trivially 0 by construction when $\Delta_{i}(\fml{S}) = 0$ for every set $\fml{S}\subseteq \fml{F}$.
$\fsn{B}$ satisfies \emph{Null player}.
\item
$\fsn{B}$ does not satisfy \emph{AXp-minimal monotonicity} for the same reason presented in~\cref{prop:pp_s}.
\item
$\fsn{B}$ satisfies \emph{$\gamma$-efficiency}, as implied by~\cref{prop:comp_b}, where
$\gamma = \sum_{j \in \mathfrak{F}(\fml{E})} \frac{1}{n} \sum_{\fml{Y}_{i}\in\mbb{C}(\fml{E}), j\in\fml{Y}_{i}} \frac{w_i}{2^{|\fml{Y}_{i}|-1}}$.
\item
The argument for whether $\fsn{B}$ satisfies \emph{Independence from the model's output} or not
is the same as presented in~\cref{prop:pp_s}.
\item
$\fsn{B}$ satisfies \emph{CXp-minimal monotonicity} for the same reason presented in~\cref{prop:pp_s}.
\item
$\fsn{B}$ satisfies \emph{Consistency with relevancy}, as implied by~\cref{prop:irrrel}.
\end{enumerate}
\end{proof}

\PropComplexityC*
\begin{proof}
The complexity of computing \xhsc scores has been shown in~\cref{cor:complexity1}.

For Responsibility Index, its definition is:
\[
\svn{Resp}(i;\fml{E}) := \max\left\{\left.\frac{1}{|\fml{S}|}\,\right|\,\fml{S}\in\mbb{A}_i(\fml{E})\right\}.
\]
It is evident that computing this score is at least as hard as computing a cardinality-minimal AXp.
Moreover, it is well-known that, for decision trees, finding one cardinality-minimal AXp is NP-hard~\cite{barcelo2020model}.
Hence, the problem of computing Responsibility Index is also NP-hard for decision trees.

The definitions of FFA and WFFA are, respectively, as follows:
\[
\svn{FFA}(i;\fml{E}) := \sum\nolimits_{\fml{S}\in\mbb{A}_i(\fml{E})} \left(\frac{1}{|\mbb{A}(\fml{E})|}\right),
\]
\[
\svn{WFFA}(i;\fml{E}) := \sum\nolimits_{\fml{S}\in\mbb{A}_i(\fml{E})} \left(\frac{1}{(|\fml{S}|\times|\mbb{A}(\fml{E})|)}\right).
\]
It is sufficient to construct a decision tree where the number of AXps is exponential.
Suppose we have a DT classifier $\fml{M}$ defined on $\fml{F} = \{1, \dots, m\}$, with $m=3k$, and $\fml{K}=\{0,1\}$.
Moreover, $\fml{F} = X \cup Y$, where $X = \{1, \dots, 2k\}$ and $Y = \{1, \dots, k\}$.
Let $\kappa$ be the classification function of $\fml{M}$.
$\fml{M}$ is composed of $k$ gadgets. The $i$-th gadget $G_i$ is defined on $\{x_{2i-1}, x_{2i}, y_{i}\}$,
and is described as follows:
\begin{equation}
\scriptstyle
\begin{array}{ll}
\\[1pt]
\tn{IF}~[(x_{2i-1} = 0) \land (x_{2i} = 1)]~\tn{THEN}~\kappa(\cdot)=1
\\[2.75pt] \nonumber
\tn{IF}~[(x_{2i-1} = 0) \land (x_{2i} = 0)]~\tn{THEN}~\kappa(\cdot)=0
\\[2.75pt] \nonumber
\tn{IF}~[(x_{2i-1} = 1) \land (x_{2i} = 0) \land (y_{i} = 1)]~\tn{THEN}~\kappa(\cdot)=1
\\[2.75pt] \nonumber
\tn{IF}~[(x_{2i-1} = 1) \land (x_{2i} = 0) \land (y_{i} = 0)]~\tn{THEN}~\kappa(\cdot)=0
\\[2.75pt] \nonumber
\tn{IF}~[(x_{2i-1} = 1) \land (x_{2i} = 1)]~\tn{THEN}~G_{i+1}
\\[1pt] \nonumber
\end{array}
\end{equation}
Let $\fml{E} = (\fml{M}, ((1,\dots,1), 1))$ be the instance to be explained.

We can compute the CXps as follows. Each gadget $i$ contributes two
CXps, namely $\{x_{2i-1},x_{2i}\}$ and $\{x_{2i},y_{i}\}$.
To compute one AXp, one must pick $x_{2i}$ or both $x_{2i-i}$ and $y_{i}$.
Since there are $k$ gadgets, we have $2^k$ AXps.
That is to say, for this explanation problem defined on a DT $\fml{M}$,
the number of CXps is polynomial on the size of $\fml{M}$,
but the number of AXps is exponential on the size of $\fml{M}$.
\end{proof}

\PropUSWffa*
\begin{proof}
Let $|\mbb{C}(\fml{E})| = n$ and $w_i = 1$ for every CXp-tree.
The unweighted $\fsn{S}$ for each feature $j$ is
\begin{equation}
\fsn{S}(j)
= \frac{1}{n} \sum_{\fml{Y}_{i}\in\mbb{C}(\fml{E}), j\in\fml{Y}_{i}} \frac{1}{|\fml{Y}_{i}|}.
\end{equation}
For the same feature $j\in\fml{F}$, its CXp-based Deeghan-Packel score is
\begin{equation}
\frac{\sum_{\fml{Y}_{i}\in\mbb{C}(\fml{E}), j\in\fml{Y}_{i}} |\fml{Y}_{i}|^{-1} }{|\mbb{C}(\fml{E})|}
= \frac{1}{n} \sum_{\fml{Y}_{i}\in\mbb{C}(\fml{E}), j\in\fml{Y}_{i}} \frac{1}{|\fml{Y}_{i}|}.
\end{equation}
Evidently, two scores are the same.
\end{proof}

\section{Experimental Details} \label{app:res}

\paragraph{Experimental environment.}
The experiments were performed on a MacBook Pro with a 6-Core Intel
Core i7 2.6 GHz processor with 16 GByte RAM, running macOS Sonoma.

\paragraph{Datasets and machine learning models.}
We selected commonly used tabular datasets in the fields of ML and XAI~\cite{asuncion2007uci,olson2017pmlb}
including six classification datasets and two regression datasets.
In addition, we selected the open-source Openstack and Qt datasets~\cite{mcintosh2018fix}, 
which are commonly used for JIT defect prediction.
Each dataset was randomly split into 80\% for training and 20\% for testing.
Decision trees were trained on three classification and two regression datasets using Orange3
~\footnote{\url{https://orangedatamining.com/}}.
Boosted trees were trained on three tabular datasets using XGBoost~\cite{chen2016xgboost},
with each boosted tree consisting of 20 trees of depth 3 per class.
Logistic regression models were trained on Openstack and Qt datasets using Scikit-learn
~\footnote{\url{https://scikit-learn.org/stable/}}.
Additionally, for two regression datasets,
we set $\delta=1.5$ when computing formal explanations.
From each dataset, 50 test instances were randomly selected for evaluation.
Moreover, a publicly available implementation\footnote{\url{https://github.com/changyaochen/rbo}.} of
RBO was used in our experiments.

\paragraph{Additional experimental results.}
The distribution of the resulting RBO values for all the test instances is shown in
~\cref{fig:rbo_distri_adult,%
fig:rbo_distri_compas,%
fig:rbo_distri_recidivism1,%
fig:rbo_distri_auto-mpg,%
fig:rbo_distri_boston_house_prices,%
fig:rbo_distri_diabetes,%
fig:rbo_distri_vehicle,%
fig:rbo_distri_wine-recognition,%
fig:rbo_distri_openstack,%
fig:rbo_distri_qt}.
Besides, \cref{tab:rbo_dt,tab:rbo_bt,tab:rbo_lr}
present a summary of the RBO values for all tested instances,
including the minimum, maximum, and mean RBO values for each pair of scores.
The second column indicates the pairs of scores being compared,
the value in the $i$-th row and $j$-th column represents the 
RBO value for the pair of scores in the $i$-th row, for the dataset in the $j$-th column.

\begin{figure*}[ht]
\centering
\includegraphics[width=\linewidth]{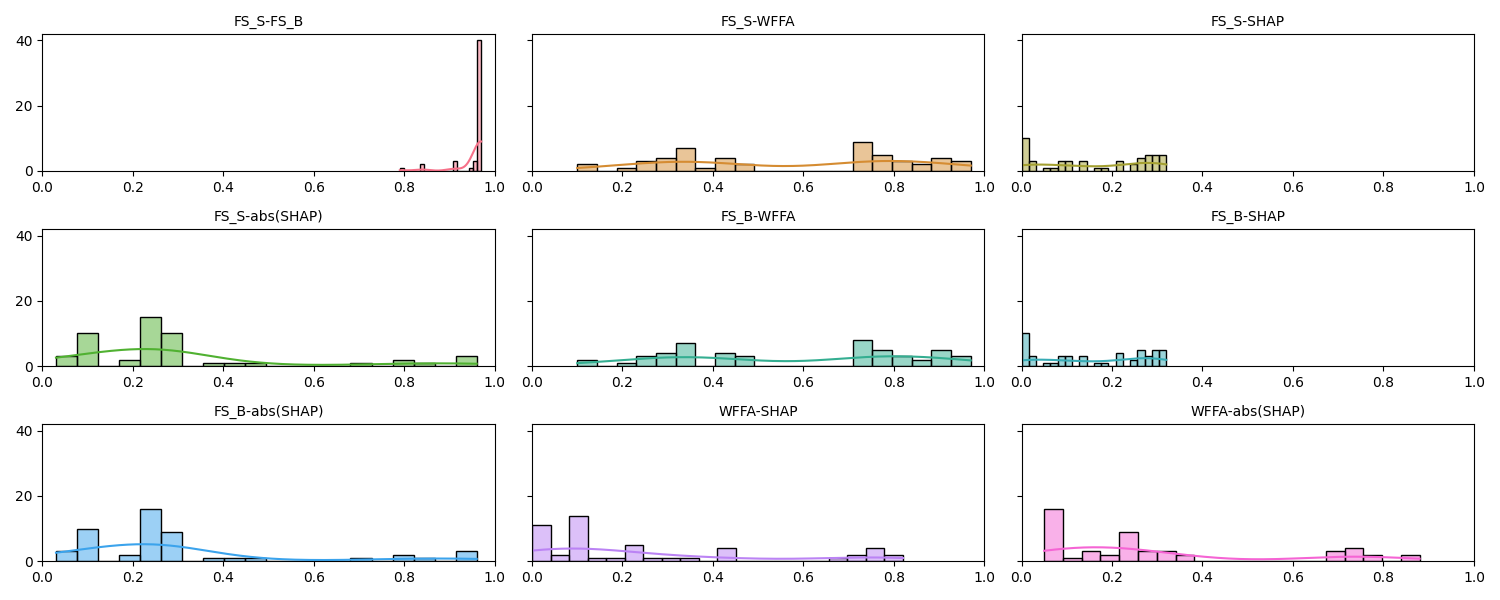}
\caption{Distribution of RBO values for all the tested instances of Adult.}
\label{fig:rbo_distri_adult}
\end{figure*}
\begin{figure*}[ht]
\centering
\includegraphics[width=\linewidth]{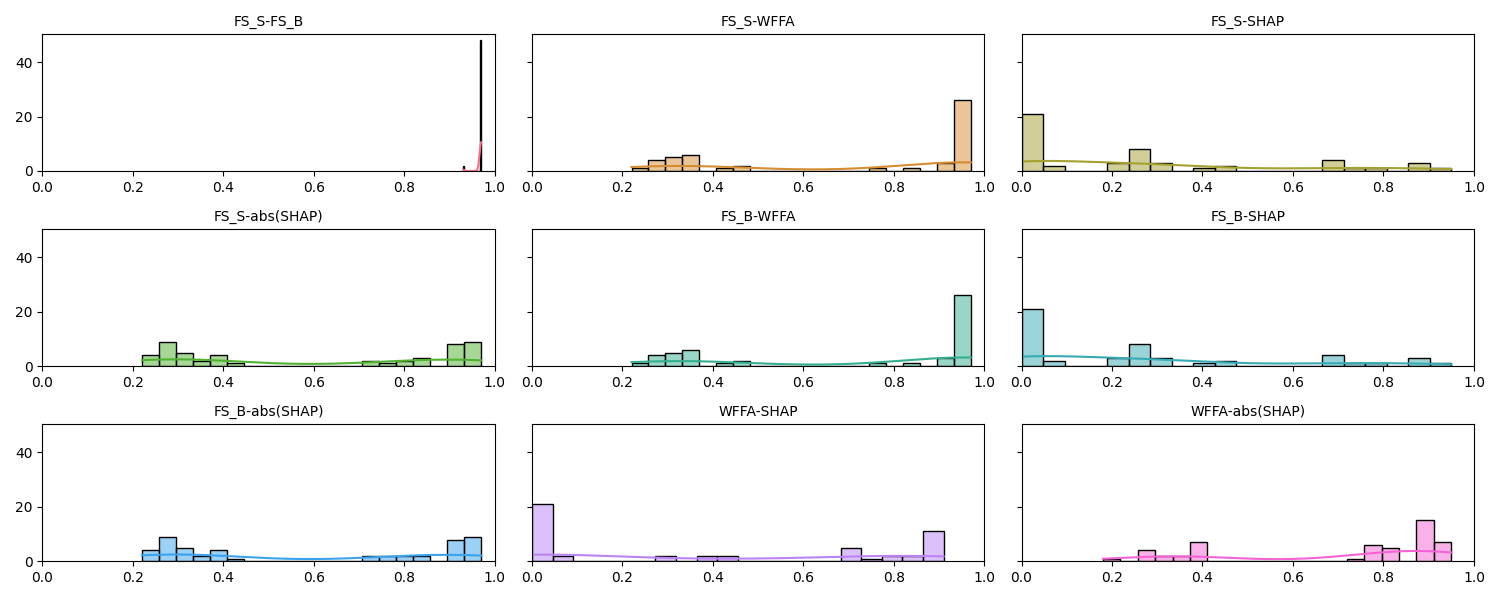}
\caption{Distribution of RBO values for all the tested instances of COMPAS.}
\label{fig:rbo_distri_compas}
\end{figure*}
\begin{figure*}[ht]
\centering
\includegraphics[width=\linewidth]{figs/recidivism}
\caption{Distribution of RBO values for all the tested instances of Recidivism.}
\label{fig:rbo_distri_recidivism1}
\end{figure*}
\begin{figure*}[ht]
\centering
\includegraphics[width=\linewidth]{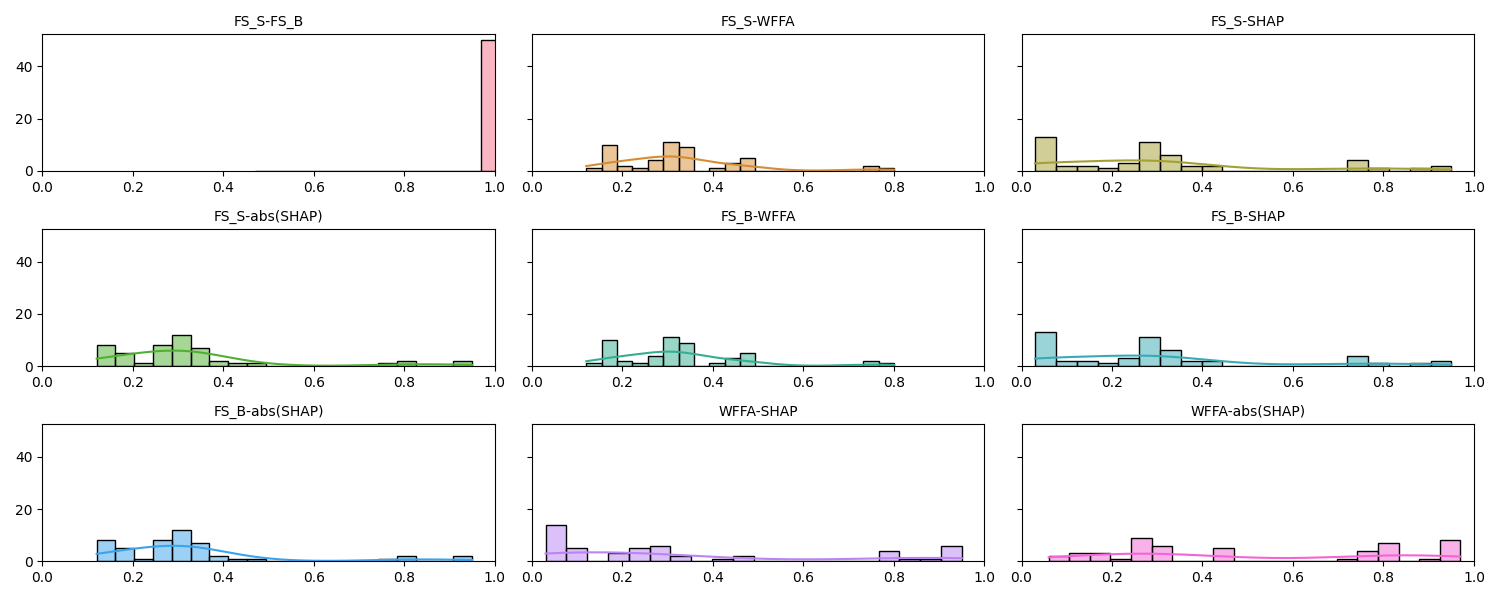}
\caption{Distribution of RBO values for all the tested instances of Auto-MPG.}
\label{fig:rbo_distri_auto-mpg}
\end{figure*}
\begin{figure*}[ht]
\centering
\includegraphics[width=\linewidth]{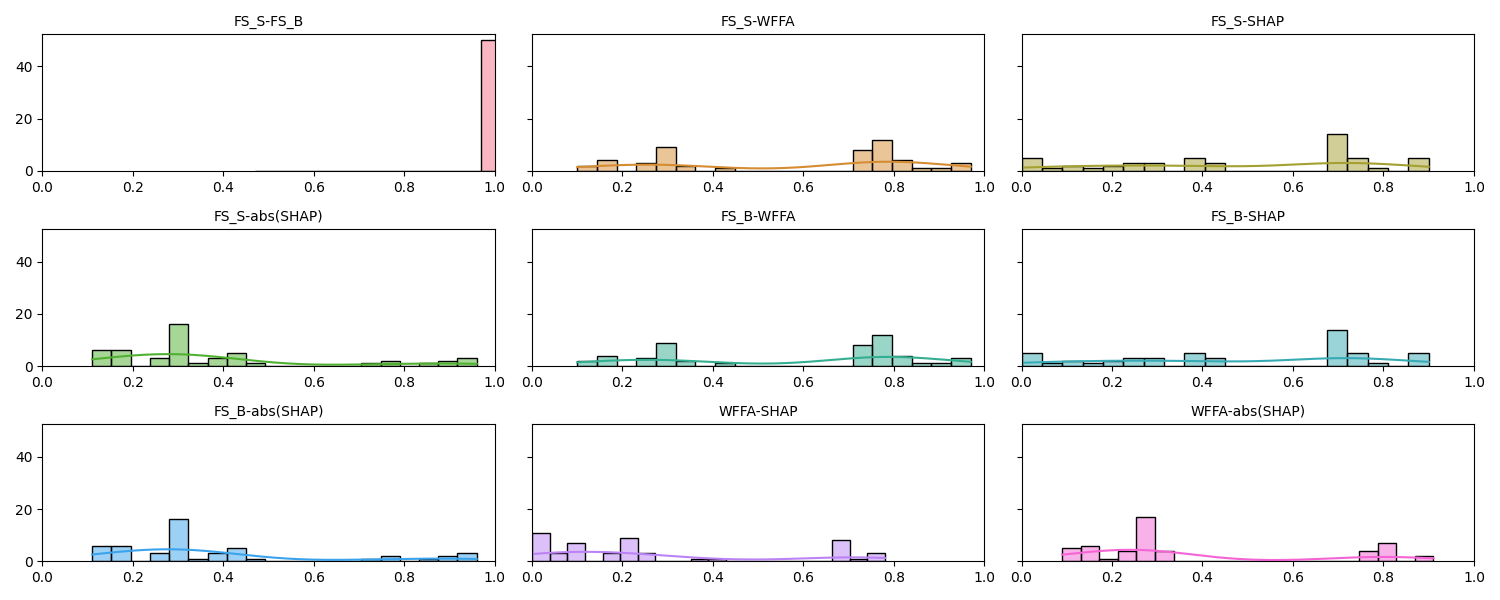}
\caption{Distribution of RBO values for all the tested instances of Boston House Prices.}
\label{fig:rbo_distri_boston_house_prices}
\end{figure*}
\begin{figure*}[ht]
\centering
\includegraphics[width=\linewidth]{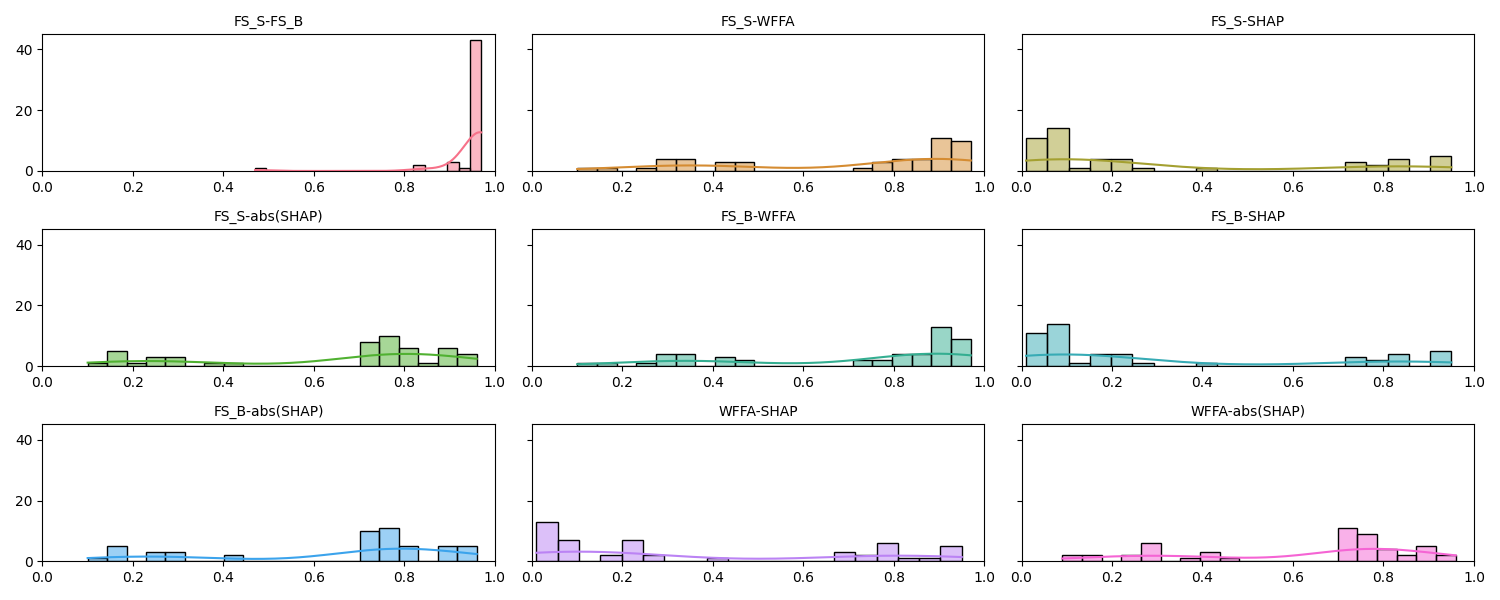}
\caption{Distribution of RBO values for all the tested instances of Diabetes.}
\label{fig:rbo_distri_diabetes}
\end{figure*}
\begin{figure*}[ht]
\centering
\includegraphics[width=\linewidth]{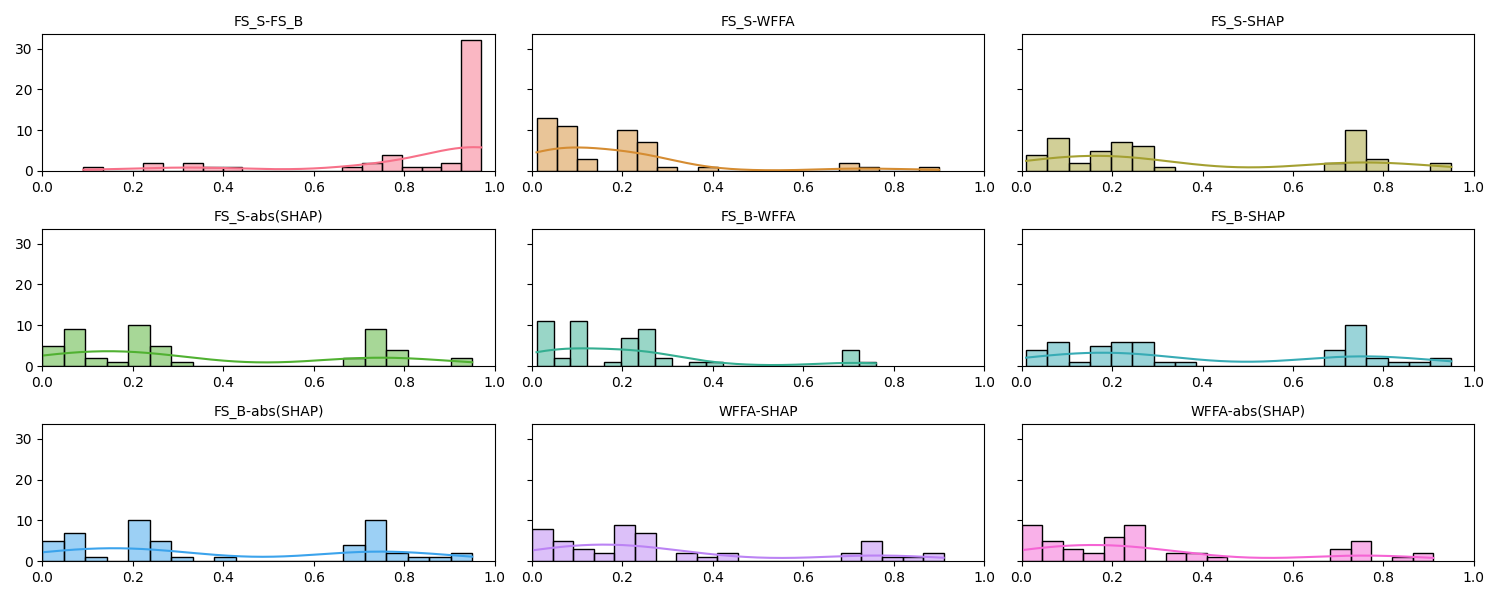}
\caption{Distribution of RBO values for all the tested instances of Vehicle.}
\label{fig:rbo_distri_vehicle}
\end{figure*}
\begin{figure*}[ht]
\centering
\includegraphics[width=\linewidth]{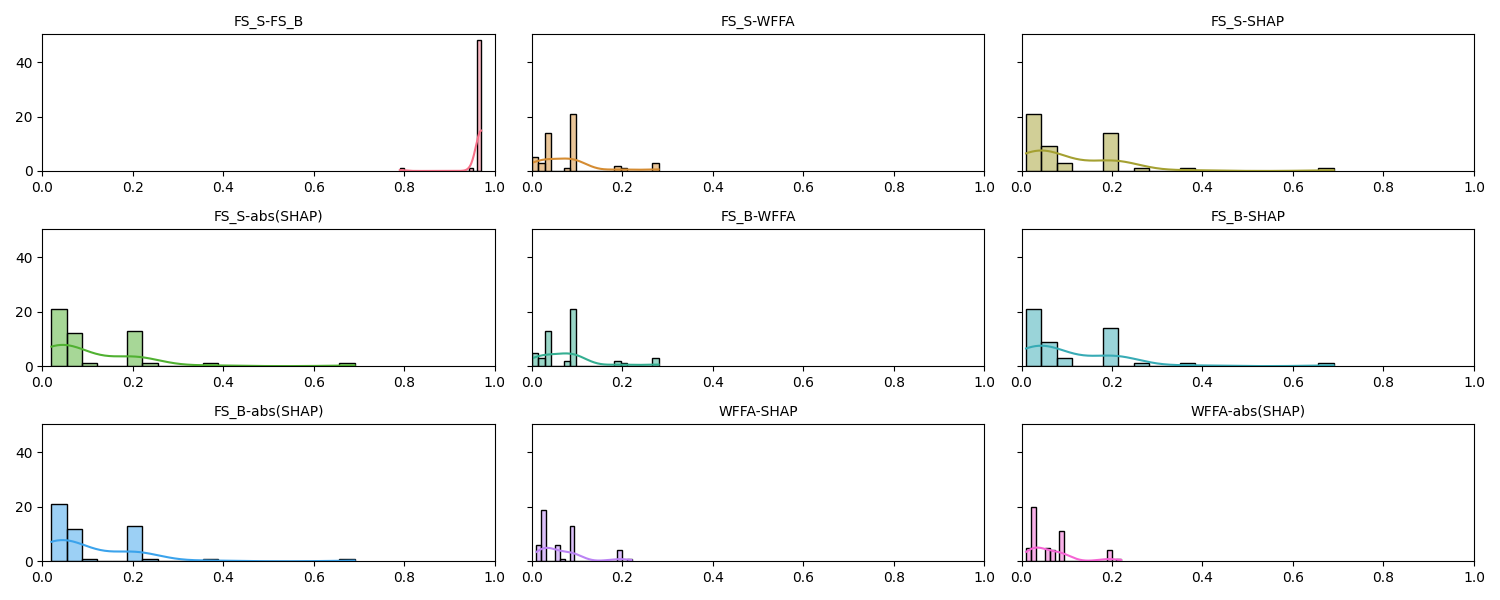}
\caption{Distribution of RBO values for all the tested instances of Wine-Recognition.}
\label{fig:rbo_distri_wine-recognition}
\end{figure*}
\begin{figure*}[ht]
\centering
\includegraphics[width=\linewidth]{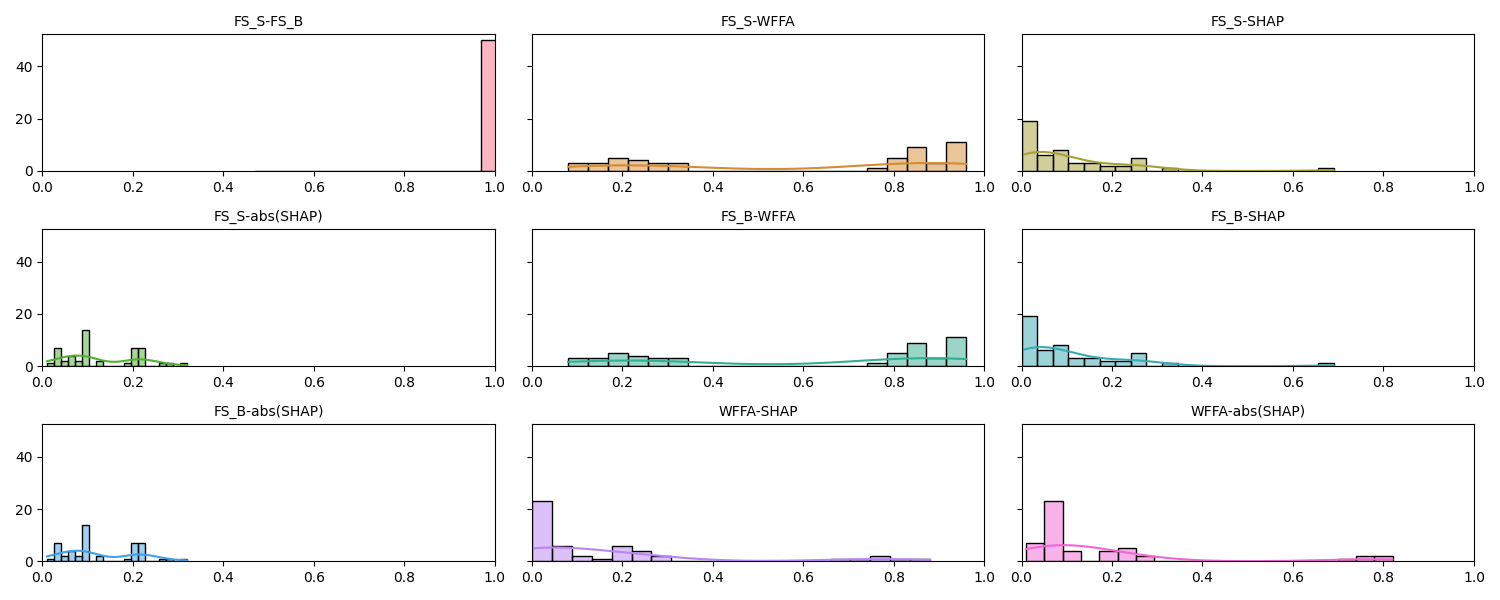}
\caption{Distribution of RBO values for all the tested instances of Openstack.}
\label{fig:rbo_distri_openstack}
\end{figure*}
\begin{figure*}[ht]
\centering
\includegraphics[width=\linewidth]{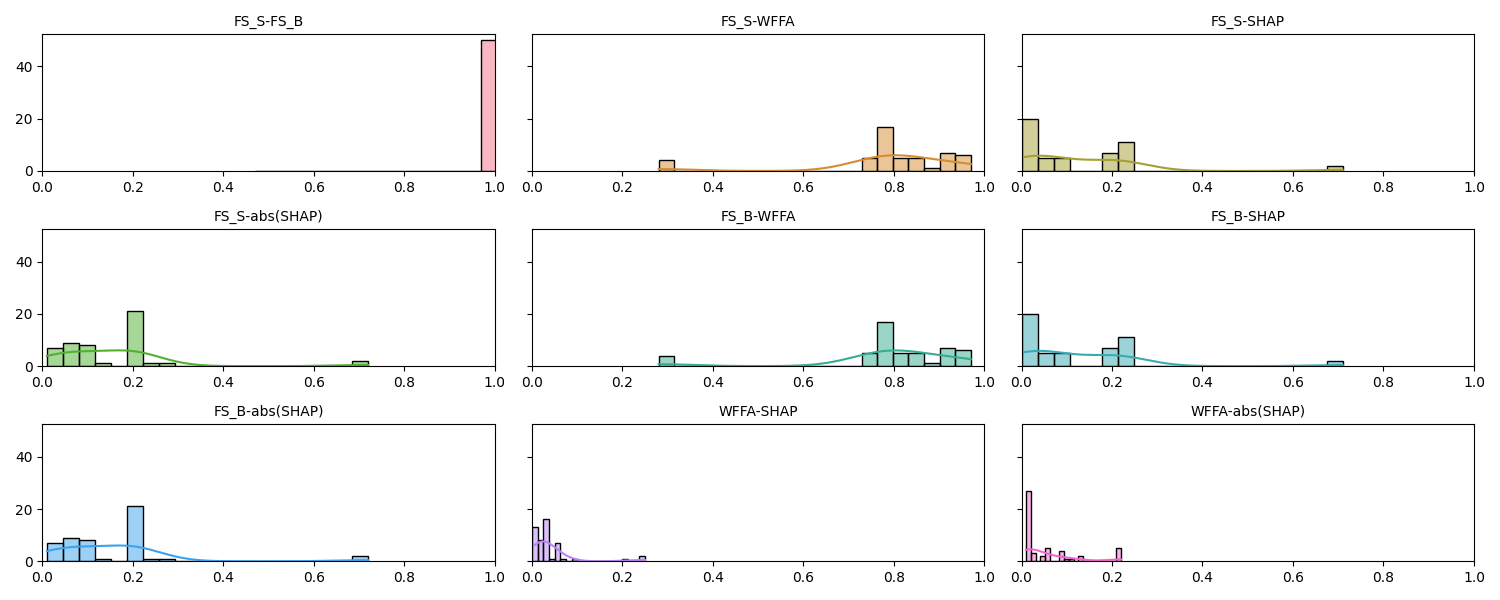}
\caption{Distribution of RBO values for all the tested instances of Qt.}
\label{fig:rbo_distri_qt}
\end{figure*}

\begin{table*}[ht]
\centering

\begin{tabular}{ccrrrrr}
\toprule
                      &           & Adult & COMPAS & Recidivism & Auto-MPG & Boston House Prices \\
\midrule
\multirow{9}{*}{Min}  & $\fsn{S}$-$\fsn{B}$    & 0.79  & 0.93   & 0.92       & 0.97     & 0.97                \\
                      & $\fsn{S}$-WFFA      & 0.1   & 0.22   & 0.26       & 0.12     & 0.1                 \\
                      & $\fsn{S}$-SHAP      & 0.0   & 0.0    & 0.0        & 0.03     & 0.0                 \\
                      & $\fsn{S}$-abs(SHAP) & 0.03  & 0.22   & 0.1        & 0.12     & 0.11                \\
                      & $\fsn{B}$-WFFA      & 0.1   & 0.22   & 0.26       & 0.12     & 0.1                 \\
                      & $\fsn{B}$-SHAP      & 0.0   & 0.0    & 0.0        & 0.03     & 0.0                 \\
                      & $\fsn{B}$-abs(SHAP) & 0.03  & 0.22   & 0.1        & 0.12     & 0.11                \\
                      & WFFA-SHAP        & 0.0   & 0.0    & 0.0        & 0.03     & 0.0                 \\
                      & WFFA-abs(SHAP)   & 0.05  & 0.18   & 0.1        & 0.06     & 0.09                \\
\midrule
\multirow{9}{*}{Max}  & $\fsn{S}$-$\fsn{B}$    & 0.97  & 0.97   & 0.97       & 0.97     & 0.97                \\
                      & $\fsn{S}$-WFFA      & 0.97  & 0.97   & 0.97       & 0.8      & 0.97                \\
                      & $\fsn{S}$-SHAP      & 0.32  & 0.95   & 0.96       & 0.95     & 0.9                 \\
                      & $\fsn{S}$-abs(SHAP) & 0.96  & 0.97   & 0.95       & 0.95     & 0.96                \\
                      & $\fsn{B}$-WFFA      & 0.97  & 0.97   & 0.97       & 0.8      & 0.97                \\
                      & $\fsn{B}$-SHAP      & 0.32  & 0.95   & 0.96       & 0.95     & 0.9                 \\
                      & $\fsn{B}$-abs(SHAP) & 0.96  & 0.97   & 0.95       & 0.95     & 0.96                \\
                      & WFFA-SHAP        & 0.82  & 0.91   & 0.96       & 0.95     & 0.78                \\
                      & WFFA-abs(SHAP)   & 0.88  & 0.95   & 0.95       & 0.97     & 0.91                \\
\midrule
\multirow{9}{*}{Mean} & $\fsn{S}$-$\fsn{B}$    & 0.96  & 0.97   & 0.97       & 0.97     & 0.97                \\
                      & $\fsn{S}$-WFFA      & 0.57  & 0.72   & 0.75       & 0.33     & 0.57                \\
                      & $\fsn{S}$-SHAP      & 0.17  & 0.26   & 0.31       & 0.29     & 0.49                \\
                      & $\fsn{S}$-abs(SHAP) & 0.3   & 0.59   & 0.56       & 0.33     & 0.38                \\
                      & $\fsn{B}$-WFFA      & 0.58  & 0.72   & 0.75       & 0.33     & 0.57                \\
                      & $\fsn{B}$-SHAP      & 0.16  & 0.26   & 0.31       & 0.29     & 0.49                \\
                      & $\fsn{B}$-abs(SHAP) & 0.3   & 0.59   & 0.56       & 0.33     & 0.38                \\
                      & WFFA-SHAP        & 0.25  & 0.4    & 0.31       & 0.33     & 0.27                \\
                      & WFFA-abs(SHAP)   & 0.3   & 0.7    & 0.48       & 0.52     & 0.38              \\
\bottomrule
\end{tabular}

\caption{Rank-biased overlap for the first set of datasets.}
\label{tab:rbo_dt}
\end{table*}

\begin{table*}[ht]
\centering

\begin{tabular}{ccrrr}
\toprule
                      &           & Diabetes & Vehicle & Wine-Recognition \\
\midrule
\multirow{9}{*}{Min}  & $\fsn{S}$-$\fsn{B}$    & 0.47     & 0.09    & 0.79             \\
                      & $\fsn{S}$-WFFA      & 0.1      & 0.01    & 0.0              \\
                      & $\fsn{S}$-SHAP      & 0.01     & 0.01    & 0.01             \\
                      & $\fsn{S}$-abs(SHAP) & 0.1      & 0.0     & 0.02             \\
                      & $\fsn{B}$-WFFA      & 0.1      & 0.01    & 0.0              \\
                      & $\fsn{B}$-SHAP      & 0.01     & 0.01    & 0.01             \\
                      & $\fsn{B}$-abs(SHAP) & 0.1      & 0.0     & 0.02             \\
                      & WFFA-SHAP        & 0.01     & 0.0     & 0.01             \\
                      & WFFA-abs(SHAP)   & 0.09     & 0.0     & 0.01             \\
\midrule
\multirow{9}{*}{Max}  & $\fsn{S}$-$\fsn{B}$    & 0.97     & 0.97    & 0.97             \\
                      & $\fsn{S}$-WFFA      & 0.97     & 0.9     & 0.28             \\
                      & $\fsn{S}$-SHAP      & 0.95     & 0.95    & 0.69             \\
                      & $\fsn{S}$-abs(SHAP) & 0.96     & 0.95    & 0.69             \\
                      & $\fsn{B}$-WFFA      & 0.97     & 0.76    & 0.28             \\
                      & $\fsn{B}$-SHAP      & 0.95     & 0.95    & 0.69             \\
                      & $\fsn{B}$-abs(SHAP) & 0.96     & 0.95    & 0.69             \\
                      & WFFA-SHAP        & 0.95     & 0.91    & 0.22             \\
                      & WFFA-abs(SHAP)   & 0.96     & 0.91    & 0.22             \\
\midrule
\multirow{9}{*}{Mean} & $\fsn{S}$-$\fsn{B}$    & 0.95     & 0.83    & 0.97             \\
                      & $\fsn{S}$-WFFA      & 0.7      & 0.18    & 0.08             \\
                      & $\fsn{S}$-SHAP      & 0.31     & 0.37    & 0.11             \\
                      & $\fsn{S}$-abs(SHAP) & 0.64     & 0.36    & 0.11             \\
                      & $\fsn{B}$-WFFA      & 0.71     & 0.21    & 0.08             \\
                      & $\fsn{B}$-SHAP      & 0.31     & 0.41    & 0.11             \\
                      & $\fsn{B}$-abs(SHAP) & 0.65     & 0.4     & 0.11             \\
                      & WFFA-SHAP        & 0.37     & 0.3     & 0.07             \\
                      & WFFA-abs(SHAP)   & 0.62     & 0.3     & 0.06             \\  
\bottomrule
\end{tabular}

\caption{Rank-biased overlap for the second set of datasets.}
\label{tab:rbo_bt}
\end{table*}

\begin{table*}[ht]
\centering

\begin{tabular}{ccrr}
\toprule
                      &           & Openstack & Qt   \\
\midrule
\multirow{9}{*}{Min}  & $\fsn{S}$-$\fsn{B}$    & 0.97      & 0.97 \\
                      & $\fsn{S}$-WFFA      & 0.08      & 0.28 \\
                      & $\fsn{S}$-SHAP      & 0.0       & 0.0  \\
                      & $\fsn{S}$-abs(SHAP) & 0.01      & 0.01 \\
                      & $\fsn{B}$-WFFA      & 0.08      & 0.28 \\
                      & $\fsn{B}$-SHAP      & 0.0       & 0.0  \\
                      & $\fsn{B}$-abs(SHAP) & 0.01      & 0.01 \\
                      & WFFA-SHAP        & 0.0       & 0.0  \\
                      & WFFA-abs(SHAP)   & 0.01      & 0.01 \\
\midrule
\multirow{9}{*}{Max}  & $\fsn{S}$-$\fsn{B}$    & 0.97      & 0.97 \\
                      & $\fsn{S}$-WFFA      & 0.96      & 0.97 \\
                      & $\fsn{S}$-SHAP      & 0.69      & 0.71 \\
                      & $\fsn{S}$-abs(SHAP) & 0.32      & 0.72 \\
                      & $\fsn{B}$-WFFA      & 0.96      & 0.97 \\
                      & $\fsn{B}$-SHAP      & 0.69      & 0.71 \\
                      & $\fsn{B}$-abs(SHAP) & 0.32      & 0.72 \\
                      & WFFA-SHAP        & 0.88      & 0.25 \\
                      & WFFA-abs(SHAP)   & 0.82      & 0.22 \\
\midrule
\multirow{9}{*}{Mean} & $\fsn{S}$-$\fsn{B}$    & 0.97      & 0.97 \\
                      & $\fsn{S}$-WFFA      & 0.6       & 0.79 \\
                      & $\fsn{S}$-SHAP      & 0.1       & 0.13 \\
                      & $\fsn{S}$-abs(SHAP) & 0.13      & 0.15 \\
                      & $\fsn{B}$-WFFA      & 0.6       & 0.79 \\
                      & $\fsn{B}$-SHAP      & 0.1       & 0.13 \\
                      & $\fsn{B}$-abs(SHAP) & 0.13      & 0.15 \\
                      & WFFA-SHAP        & 0.17      & 0.04 \\
                      & WFFA-abs(SHAP)   & 0.17      & 0.06 \\
\bottomrule
\end{tabular}

\caption{Rank-biased overlap for the third set of datasets.}
\label{tab:rbo_lr}
\end{table*}

\end{document}